
\documentclass{article}

\usepackage{microtype}
\usepackage{graphicx}
\usepackage{subfigure}
\usepackage{booktabs} 

\usepackage{hyperref}



\usepackage[accepted]{icml2023}

\usepackage{amsmath}
\usepackage{amssymb}
\usepackage{mathtools}
\usepackage{amsthm}

\usepackage[capitalize,noabbrev]{cleveref}

\theoremstyle{plain}
\newtheorem{theorem}{Theorem}[section]
\newtheorem{proposition}[theorem]{Proposition}

\theoremstyle{definition}
\newtheorem{definition}[theorem]{Definition}

\theoremstyle{remark}

\usepackage[disable,textsize=tiny]{todonotes}



\usepackage{booktabs} 

\usepackage[algo2e,ruled,vlined,linesnumbered]{algorithm2e}
\SetKwComment{Comment}{$\triangleright$\ }{}

\def\eqref#1{Equation~\ref{#1}}			
\def\figref#1{Figure~\ref{#1}}			
\def\tabref#1{Table~\ref{#1}}			
\def\secref#1{Section~\ref{#1}}			
\def\algref#1{Algorithm~\ref{#1}}		
\def\dffref#1{Definition~\ref{#1}}		


\newcommand\filledcirc{\ensuremath{{\color{white}\bullet}\mathllap{\circ}}}

\newcommand{\circleftarrow}{\mathrel{\leftarrow}\!\!\!\filledcirc}
\newcommand{\circleftrightarrow}{\mathrel{\circ\mbox{--}\circ}}

\newcommand{\nodes}{\mathbf{V}}
\newcommand{\obs}{\mathbf{O}}
\newcommand{\lat}{\mathbf{L}}

\newcommand{\pag}{\mathcal{P}}

\DeclareMathOperator{\MAG}{\mathcal{M}}
\DeclareMathOperator{\homology}{\mathrm{homology}}

\DeclareMathOperator{\ICDSep}{\mathbf{ICD-Sep}}
\DeclareMathOperator{\DSepop}{\mathbf{D-Sep}}
\DeclareMathOperator{\PDSepop}{\mathbf{Possible-D-Sep}}

\newcommand{\DSEPset}[2]{\DSepop(#1, #2)}
\newcommand{\PDSEPset}[2]{\PDSepop(#1, #2)}
\newcommand{\PDSPath}[3]{\Pi_{#2}(#1, #3)}

\newcommand{\assign}{\leftarrow}


\DeclareMathOperator{\sindep}{Ind}  

\makeatletter
\newcommand*{\indep}{%
  \mathbin{%
    \mathpalette{\@indep}{}%
  }%
}
\newcommand*{\nindep}{%
  \mathbin{
    \mathpalette{\@indep}{\not}
  }%
}
\newcommand*{\@indep}[2]{%
  \sbox0{$#1\perp\m@th$}
  \sbox2{$#1=$}
  \sbox4{$#1\vcenter{}$}
  \rlap{\copy0}
  \dimen@=\dimexpr\ht2-\ht4-.2pt\relax
  \kern\dimen@
  {#2}%
  \kern\dimen@
  \copy0 
} 
\makeatother

\newcommand{\tsicd}{TS-ICD~}

\icmltitlerunning{From Temporal to Contemporaneous Iterative Causal Discovery in the Presence of Latent Confounders}

\begin{document}

\twocolumn[
\icmltitle{From Temporal to Contemporaneous Iterative Causal Discovery \\ in the Presence of Latent Confounders}



\icmlsetsymbol{equal}{*}

\begin{icmlauthorlist}
\icmlauthor{Raanan Y. Rohekar}{comp}
\icmlauthor{Shami Nisimov}{comp}
\icmlauthor{Yaniv Gurwicz}{comp}
\icmlauthor{Gal Novik}{comp}
\end{icmlauthorlist}

\icmlaffiliation{comp}{Intel Labs}

\icmlcorrespondingauthor{Raanan Y. Rohekar}{raanan.yehezkel@intel.com}

\icmlkeywords{Causal Discovery, Time-series, Latent Confounders, FCI}

\vskip 0.3in
]



\printAffiliationsAndNotice{}  

\begin{abstract}
We present a constraint-based algorithm for learning causal structures from observational time-series data, in the presence of latent confounders. We assume a discrete-time, stationary structural vector autoregressive process, with both temporal and contemporaneous causal relations. One may ask if temporal and contemporaneous relations should be treated differently. The presented algorithm gradually refines a causal graph by learning long-term temporal relations before short-term ones, where contemporaneous relations are learned last. This ordering of causal relations to be learnt leads to a reduction in the required number of statistical tests. We validate this reduction empirically and demonstrate that it leads to higher accuracy for synthetic data and more plausible causal graphs for real-world data compared to state-of-the-art algorithms.
\end{abstract}

\section{Introduction}

Automated discovery of causal structures from observational time-series is an important goal in empirical science \citep{pearl2010introduction, spirtes2010introduction, peters2017}. In many cases, the use of interventional experiments to reason about the underlying causal structure is unethical, impossible due to technical reasons, or expensive. Another challenge is to reason about the structure when latent confounders may exist. In this paper we consider a set of variables, measured simultaneously and repeatedly over time, resulting in discrete-time series. It is desired to identify temporal (lagged), as well as contemporaneous causal relations among the variables. A temporal causal relation exists if the value of a variable effects its own future values, or the future values of other variables. Alternatively, a latent (hidden) variable may effect the values of two variables, measured at two distinct time stamps. Contemporaneous causal relations are causal relations between simultaneously measured variables (within the same measurement time-window), or between a latent variable and those variables. 
In this work we assume stationary, structural vector auto-regression (SVAR), with latent confounders as the data-generating process, and the sole presence of observational data. In addition, we assume faithfulness \citep{spirtes2000} and causal Markov assumptions \citep{pearl2009causality}. We do not assume any parametric assumptions, such as linear relations and normally distributed variables. We assume the causal structure is a DAG over observed and latent variables, where stationarity is represented by a graph over observed variables, containing temporal and contemporaneous relations, repeating in time.
Under these assumptions, constraint-based causal-discovery algorithms perform a series of conditional independence (CI) tests for determining if two variables are independent conditioned on a set of variables, and then detect the entailed causal relations \citep{spirtes2000, zhang2008completeness, pearl1991theory}. In general, the true underlying causal graph cannot be recovered from pure observational data. Instead, an equivalence class of the underlying graph is sought, such that it includes every graph that cannot be refuted given the observed data (or the results of CI-tests). Interestingly, it was recently shown that an equivalence class can be used for causal inference \citep{zhang2008causal, jaber2018causal, jaber2019causal} and even for learning deep neural network structures \citep{rohekar2018constructing, rohekar2019modeling} and interpreting self-attention \citep{nisimov2022clear}.
\citet{richardson2002ancestral} derived ancestral graphs for representing the equivalence class in the presence of latent confounders, where they showed that these graphs are closed under marginalization and conditioning. In this paper, we consider learning this equivalence class for representing temporal and contemporaneous relations in the SVAR setting with finite order---a dynamic partial-ancestral-graph \citep{malinsky2018causal}.

\section{Related Work}

Constraint-based algorithms for causal discovery were mostly pioneered by \citet{spirtes2000} with the PC and FCI algorithms, and by \citet{pearl1991theory}. They are generally sound and complete when a perfect CI-test is used (e.g., a statistical test in the large sample limit). That is, they return an equivalence class in which no member can be refuted and no member is missing under the assumptions on which they rely. However, often a limited dataset of observations is available for which statistical tests of conditional independence are prone to errors. Thus, these algorithms commonly seek to reduce the total number of conditional independence (CI) tests \citep{colombo2012learning, claassen2013learning, nisimov2021improving}, or measure their uncertainty \citep{rohekar2018bayesian}. Recently, \citet{gerhardus2020high} proposed a method for improving the accuracy of CI-tests in the linear-Gaussian case, by augmenting the conditioning set with parents of the tested nodes. However, since many statistical tests, such as G-square and conditional mutual information, suffer from the curse-of-dimensionality, it is often more important to reduce CI tests having large conditioning sets. \citet{spirtes2000} proposed starting off with a fully-connected graph and iteratively increasing the size of the conditioning set (PC and FCI algorithms). Thus, in each iteration the graph becomes sparser, and as the conditioning set sizes increase, fewer CI-tests are required. However, in the presence of latent confounders, the FCI algorithm requires applying this iterative process twice, resulting in a possibly sub-optimal result. \citet{rohekar2021iterative}, using the principles underlying FCI, proposed the ICD algorithm that requires a single iterative learning loop, leading to significantly fewer CI tests. 

Based on these approaches, several algorithms for learning from time-series were proposed in the past decade. \citet{entner2010causal} proposed tsFCI, an adaptation of FCI \citep{spirtes1999algorithm} to time-series, but excluded the possibility of contemporaneous causal relations. \citet{malinsky2018causal} proposed SVAR-FCI, another adaptation of FCI that learns both contemporaneous and temporal relations. In the past years, several improvements to the FCI algorithm were proposed, among those is the RFCI algorithm that requires significantly fewer CI-tests than FCI. \citet{gerhardus2020high}, demonstrated the applicability of this algorithm for learning time-series, and called it SVAR-RFCI. \citet{gerhardus2020high} also proposed the novel LPCMCI algorithm and demonstrated empirically that it achieves state-of-the-art accuracy. However, LPCMCI employs larger conditioning sets compared to other algorithms, which may hinder its accuracy when using a CI-test that suffers from the curse-of-dimensionality. Thus, it is desired to both reduce the total number of CI-tests, and reduce the conditioning set sizes, while maintaining soundness and completeness.

Under causal sufficiency and for learning contemporaneous relations, \citet{yehezkel2009rai} proposed to recursively split the graph nodes, during learning, into two types: exogenous and endogenous, calling disconnected groups `autonomy'. Then, they start off with learning edges connecting exogenous to endogenous nodes, and only then to learn the sub-graph over the endogenous nodes (refinement of sub-graphs over exogenous nodes are called recursively). They demonstrate that this approach allows further reducing the number of CI-tests having large conditioning sets. Although the usefulness of this approach was recently demonstrated in other settings \citep{pmlr-v186-sugahara22a}, it is not clear how to extend this approach to support the presence of latent variables, and to SVAR time-series data setting for learning dynamic partial-ancestral-graphs.

In this work, we employ the approach of autonomy identification (exogenous-endogenous split), and the approach of learning with a single iterative causal discovery loop of the ICD algorithm \citep{rohekar2021iterative}, and present the TS-ICD algorithm that efficiently learns a dynamic partial-ancestral-graph from time-series data. Interestingly, as it is shown in this paper, autonomy identification in SVAR setting leads TS-ICD to learn long-range temporal relations before learning short-range ones, and learn contemporaneous relations after learning temporal relations. This is different from a simple adaptation of the ICD algorithm to time-series data, which has no order upon which CI tests are called (we also empirically demonstrate this difference in an ablation study).

\section{Autonomy Identification for Iterative Causal Discovery from Time-Series}

First, we provide preliminaries. Then we derive TS-ICD that adapts the learning paradigm called RAI, recursive autonomy identification, \citep{yehezkel2009rai} for causal discovery under the assumptions in this paper.

\begin{figure}[t]
\centering
\includegraphics[width=0.9\columnwidth]{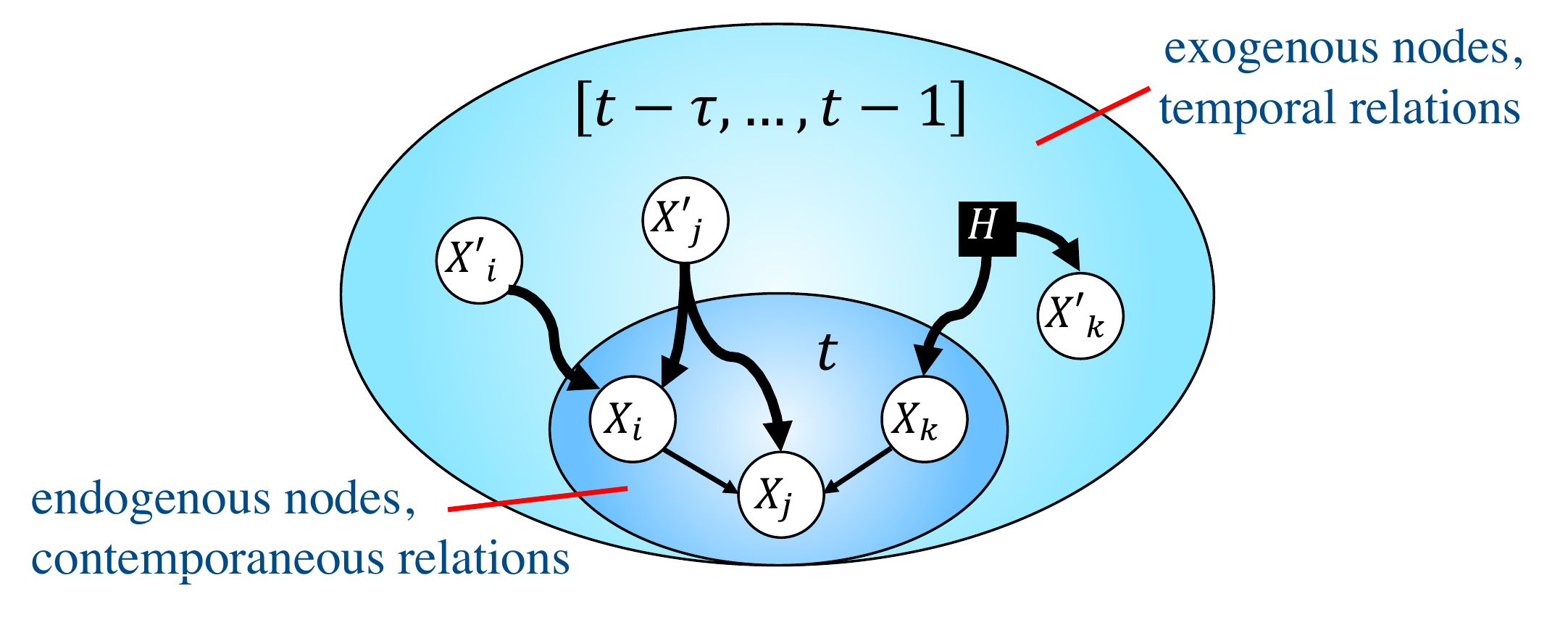}
\caption{A illustration of the distinction between endogenous (in the inner oval) and exogenous (outside the inner oval) nodes, and the temporal (thick curved arrows) and contemporaneous (straight thin arrows) causal relations. Endogenous nodes $\{X_i,X_j,X_k\}$ are measurements at time $t$, and exogenous nodes $\{X_i,X_j,X_k\}$ are past measurements, having time-stamps within $[t-\tau,\ldots,t-1]$, of the same variables.}
\label{fig:endo_exo}
\end{figure}

\subsection{Preliminaries}

A causal directed acyclic graph (DAG) is defined over set $\nodes=\obs\cup\lat$ of observed $\obs$ and latent $\lat$ variables. 
We use an ancestral graph \citep{richardson2002ancestral} to model conditional independence relations among the observed variables. An ancestral graph that represents all conditional independence relations, and only those, is called a maximal ancestral graph (MAG). A MAG is a mixed graph with two types of edge-marks, head `{-}\textgreater' and tail `{--}'. Let $X$ and $Y$ be connected in a MAG. On the connecting edge, a tail edge-mark at $X$ indicates that $X$ is an ancestor of $Y$ in any DAG represented by this MAG, and a head edge-marks at $X$ indicates that $X$ is not an ancestor of $Y$.
An important property of MAGs is that for every DAG and a set of observed variables $\obs$, there exists a unique MAG that encodes all conditional independence relations among $\obs$ after marginalizing out $\lat$. It is therefore the learning goal of causal discovery in the presence of latent variables. In a MAG, a graphical criterion called m-separation determines conditional independence. An equivalence class of MAGs, where every member may encode only those conditional independence relations that are in the MAG, is called partial ancestral graph (PAG). A PAG that encodes all conditional independence relations encoded in the MAG is called a completed PAG. A PAG has three types of edge-marks, head, tail, and circle `o'. A circle edge-mark indicates position at which in the equivalence class there exist at least one MAG with a head edge-mark and one MAG with a tail edge-mark. Thus, a PAG representing no information is a fully-connected graph with only circle edge-marks. 

An $n$ dimensional SVAR process with a finite order $\tau$ can be defined as a structural equation model: $\forall t\in \mathbb{N}$ and $ \forall i\in\{1,\ldots,n\}$
\begin{equation}
    X_{i}^{t} = f_{i}(\nodes^{t}\setminus X_{i}^{t}, \nodes^{t-1}, \ldots, \nodes^{t-\tau}, \varepsilon_{i}^{t}),
\end{equation}
where $\{\varepsilon_{i}^{t}|t\in \mathbb{N}, i\in\{1,\ldots,n\}\}$ are mutually and serially independent noise terms. Each $\nodes^{s}$ consists of $n$ variables measured at time $s$. We assume this data-generating process is stationary. Corresponding to this process is a graphical model called dynamic-DAG \citep{malinsky2018causal}, where some variables are latent $\lat$ and the others $\obs$ are observed from the marginal stationary data-generating process. A MAG for this dynamic-DAG is called dynamic-MAG and an equivalence class for it is called dynamic-PAG.

\subsection{The TS-ICD Algorithm}

For a stationary, structural vector auto-regressive (SVAR) process, with order $\tau$, TS-ICD learns a dynamic-PAG $\pag$ \citep{malinsky2018causal} representing causal relations among observed variables. Let  $\obs^t=\{X_1^t, \ldots, X_n^t\}$ be the set of observed nodes at time-stamp $t$. Thus $\pag$ is a PAG over $\bigcup_{s=t-\tau}^t \obs^{s}$, and $P(\obs^{s}) = P(\obs^{s+r}), \forall r\in\mathbb{N}$ (stationarity). However, considering all the possible edges in this graph may be inefficient. Multiple possible edges may represent the exact same casual meaning. For example, an edge between two consecutive measurements of a variable $(X_i^{t-1}, X_i^{t})$ represents auto-correlation, and has the same meaning as the edge $(X_i^{t-\tau}, X_i^{t-\tau+1}), \forall \tau \geq 2$. From stationarity, faithfulness, and causal Markov assumptions, causal relations are consistent over time. Namely, in $\pag$ the edge between $X_i^{s}$ and $X_j^t$ is identical to the edge between $X_i^{k}$ and $X_j^l$ if $t-s=l-k$ \citep{entner2010causal}.

\begin{definition}[Homology]
A pair $(X_i^{s}, X_j^t)$ is homologous to pair $(X_i^{k}, X_j^l)$ if $t-s=l-k$. The set of all pairs homologous to $(X_i^{s}, X_j^t)$ are denoted $\homology(X_i^{s}, X_j^t)$.
\end{definition}

Let $\pag$ be a dynamic PAG defined over $\boldsymbol{V}$, a set of nodes measured at $t$, and $\boldsymbol{U}$, a set of nodes measured at $\tau$ earlier time-stamps $[t-\tau, \ldots, t-1]$. We call nodes in $\boldsymbol{V}$ endogenous, and nodes in $\boldsymbol{U}$ exogenous. We call edges among $\boldsymbol{V}$ contemporaneous, and edges connecting a node in $\boldsymbol{U}$ and a node in $\boldsymbol{V}$ temporal. Let the set of edges in $\pag$ be the union of temporal and contemporaneous edges. See \figref{fig:endo_exo} for an example. No connected pair in $\pag$ is homologous to another connected pair in $\pag$. Hence, this PAG consists of the minimal set of edges to be learned.

Next, we extend the notion of `autonomous sub-structure' proposed by \citet{yehezkel2009rai} to autonomous sub-graph of a PAG. For that, we use the following definitions.
For MAGs, a set called $\DSEPset{A}{B}$ contains the conditioning set by which $A$ and $B$ are m-separated (conditionally independent), if $A$ and $B$ are not adjacent and $A$ is not an ancestor of $B$ \citep[Theorem 6.2]{spirtes2000}.

\begin{definition}[D-SEP \citep{spirtes2000}]
If $\MAG$ is a MAG and $A\neq B$, $Z$ is in $\DSEPset{A}{B}$ if and only if $Z\neq A$ and there is a path between $A$ and $Z$ such that every node on the path is an ancestor of $A$ or $B$ and, except for the endpoints, is a collider.
\end{definition}

For PAGs, a super-set of $\DSEPset{A}{B}$, called $\PDSEPset{A}{B}$ \citep{spirtes2000}, includes all the nodes that cannot be ruled out from being in $\DSEPset{A}{B}$. A node $Z$ is in $\PDSEPset{A}{B}$ if and only if there is a PDS-Path from $Z$ to $A$.

\begin{definition}[PDS-path \citep{rohekar2021iterative}]\label{dff:pds-path}
    A possible-D-Sep-path (PDS-path) from $A$ to $Z$, with respect to $B$, in PAG $\pag$, denoted $\PDSPath{A}{B}{Z}$, is a path $\langle A,\ldots,Z\rangle$ such that $B$ is not on the path and for every sub-path $\langle U,V,W \rangle$ of $\PDSPath{A}{B}{Z}$, $V$ is a collider or $\{U, V, W\}$ forms a triangle.
\end{definition}

\citet{rohekar2021iterative}, defined a set smaller than $\PDSepop$, called $\ICDSep$, which complies with the following conditions. ICD-Sep conditions for $\mathbf{Z}\subset\ICDSep(A,B)$, given $r$, are
\begin{enumerate}
    \itemsep0em
    \item $|\mathbf{Z}|=r$,\label{ICD-size}
    \item $\forall Z\in\mathbf{Z}$, there exists a PDS-path $\PDSPath{A}{B}{Z}$ such that,\label{ICD-node-constraints}
    \begin{enumerate}
        \item $|\PDSPath{A}{B}{Z}| \leq r$ and\label{ICD-pathlen}
        \item every node on $\PDSPath{A}{B}{Z}$ is in $\mathbf{Z}$, \label{ICD-closed-set}
    \end{enumerate}
    \item $\forall Z\in\mathbf{Z}$, node $Z$ is a possible ancestor of $A$ or $B$.\label{ICD-possible-ancestor}
\end{enumerate}

We now extend the definition of autonomous sub-graph to cases where causal sufficiency cannot be assumed. That is, autonomy in PAG.

\begin{definition}[Autonomous sub-graph in a PAG]\label{dff:autonomous}
In a PAG $\pag$ over nodes $\obs$, a sub-graph $\pag_\mathrm{Au}$ over $\obs_\mathrm{Au}\subset\obs$ is called autonomous given a set $\obs_\mathrm{Ex}\subset\obs$ if $~\forall (A,B)\in\pag_\mathrm{Au}$,  $\ICDSep(A,B)$ and $\ICDSep(B,A)$ are subsets of $\obs_\mathrm{Ex}\cup\obs_\mathrm{Au}$. If $\obs_\mathrm{Ex}$ is empty, we say the sub-graph is (completely) autonomous.
\end{definition}

From \dffref{dff:autonomous} it follows that in a dynamic PAG, a sub-graph containing contemporaneous relations among endogenous nodes is an autonomous sub-graph given exogenous nodes. By identifying contemporaneous relations as autonomous, the overall number of CI tests required for learning the graph can be reduced. 

\citet{yehezkel2009rai} defined a recursive learning scheme called RAI which contains three main stages: 
\begin{enumerate}
    \itemsep0em
    \item Exogenous-endogenous refinement: remove edges connecting exogenous to endogenous nodes that are found independent conditioned on a set of size $r$,
    \item Endogenous refinement: remove of edges among endogenous nodes found independent conditioned on a set of size $r$,
    \item Autonomy identification: identify autonomous sub-graphs, and recursively call for each autonomous sub-graph using $r\leftarrow r+1$.
\end{enumerate}
By initially refining the edges between exogenous and endogenous nodes, the number of CI tests required in the second stage are reduced. This is because after stage 1, fewer nodes from the exogenous nodes are considered for a conditioning set in stage 2. In stage 3 they split into autonomous sub-graphs, by first identifying nodes having the lowest topological order calling them endogenous nodes, removing them temporarily, and identifying the remaining disconnected graphs calling them exogenous. Note that exogenous nodes are not required to be mutually independent.

In TS-ICD we start off with a fully-connected graph and refine edges connecting exogenous nodes to endogenous nodes before refining edges between the endogenous nodes. That is, we refine temporal relations before refining contemporaneous ones. However, the dynamic PAG can be split into endogenous and exogenous nodes in different manners. 

We derive an ordering for refining temporal relations, as entailed from different exogenous-endogenous splits. 
In an extreme case, $\obs^{t-\tau+1}\cup,\ldots,\cup\obs^t$ are set to be endogenous, and $\obs^{t-\tau}$ to be exogenous (\figref{fig:split_cases}, case \#1). In this extreme case, due to homology, edges between nodes $\obs^{t-\tau}$ and nodes in $\obs^{t-\tau+1}\cup,\ldots,\cup\obs^{t-1}$ (e.g., \figref{fig:split_cases} case \#1, edge $B$) are already included for endogenous nodes $\obs^{t-\tau+1}\cup,\ldots,\cup\obs^t$ (e.g., \figref{fig:split_cases} case \#1, edge $B'$). Thus, from the RAI learning paradigm, initially refining edges between $\tau$ apart nodes may reduce the number of CI tests required for refining edges between nodes that are less than $\tau$ apart (endogenous nodes). Now consider a different case where $\obs^{t-\tau+2}\cup,\ldots,\cup\obs^t$ are set to be endogenous, and $\obs^{t-\tau}\cup\obs^{t-\tau+1}$ to be exogenous (\figref{fig:split_cases} case \#2). Here, edges connecting nodes that are $\tau$ or $\tau-1$ apart are only temporal ones and an ordering between them cannot be determined. However, edges between $\tau-2$ apart nodes are both temporal and among endogenous nodes (\figref{fig:split_cases} edges $C$ and $C'$). Thus, initially refining of edges connecting $\tau$ and $\tau-1$ apart nodes may reduce the number of CI tests required for refining edges connecting $\tau-2$ apart nodes. From these cases, an ordering for refining temporal links is entailed: $[\tau, \tau-1, \ldots, 1]$, where longer-range (large interval) edges are refined first. This complies with the general observation that temporal relations are refined before refining contemporaneous ones.

\begin{figure}[t]
\centering
\includegraphics[width=1.0\columnwidth]{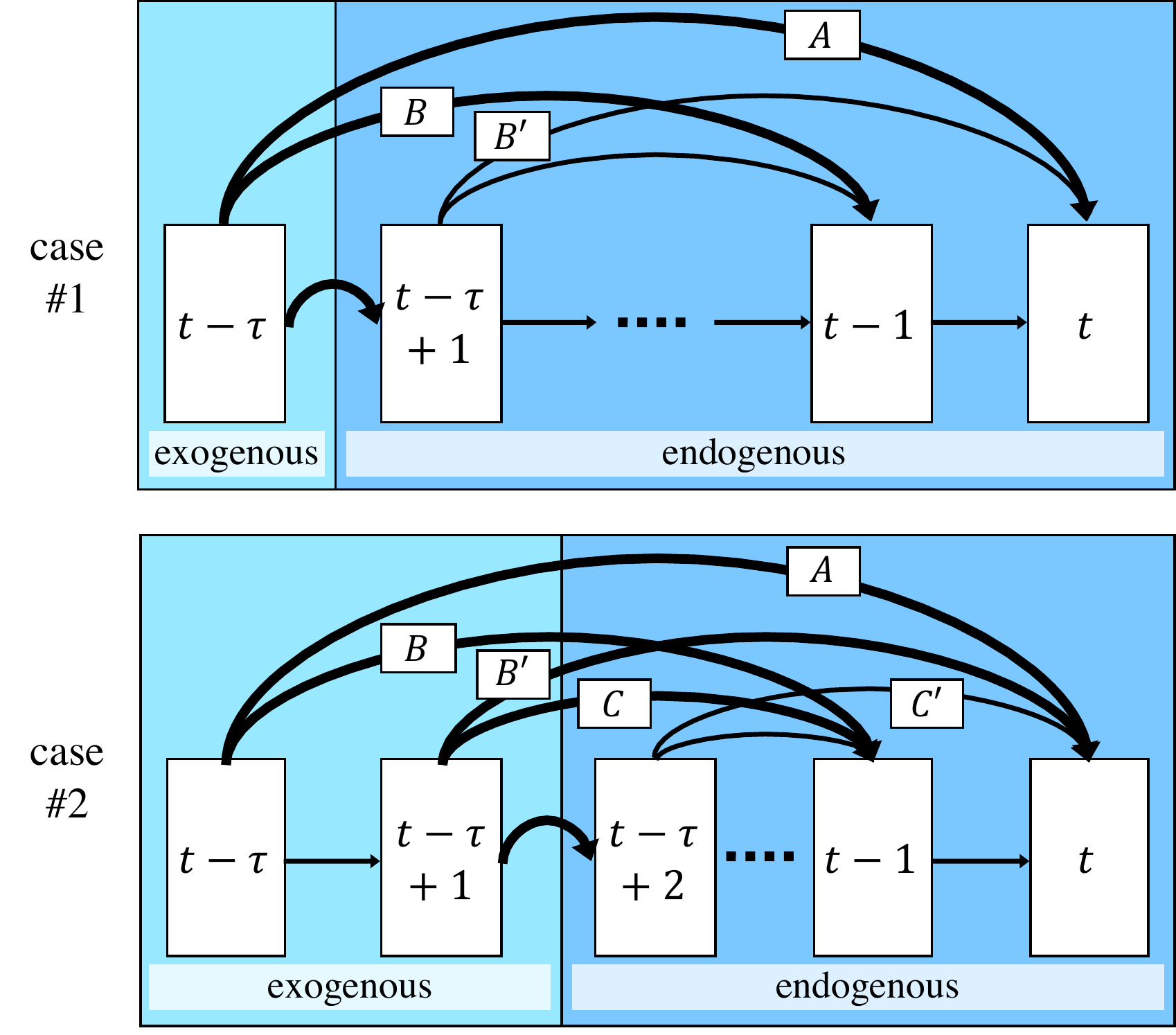}
\caption{An illustration for explaining how an ordering for refinement is derived. Edges marked $B$ and $B'$ are similar in the sense that they connect nodes with equal time-interval (homology) of $\tau-1$. Likewise, $C$ and $C'$ are similar having time-interval $\tau-2$. Case \#1 describe an extreme case in which only the earliest time-stamp is considered for exogenous nodes. In this case, edges $A$ and $B$ describe temporal relations but $B'$ is over endogenous nodes. Refinement of $A$ (interval length $\tau$) can reduce the number of CI tests required for refining $B'$ (interval length $\tau-1$). Thus, edge $A$ is refined before refining $B$. In case \#2, edges $A$, $B$, and $B'$ are all temporal relations and ordering between them cannot be determined; however, edge $B'$ (interval length $\tau-1$) is determined to be refined before $C'$ (interval length $\tau-2$). This leads to an ordering for refining temporal links, $[\tau, \tau-1,\tau-2,\ldots,1]$, based on the time interval between the connected nodes.}
\label{fig:split_cases}
\end{figure}

From these findings, we derive the TS-ICD algorithm, an adaptation of the ICD algorithm for time series data, using the RAI learning scheme to utilize SVAR assumption of homology and stationarity to reduce the number of CI tests. The pseudo-code is given in \algref{alg:Alg}. Initially, edge orientation for temporal relations are fixed such that they eliminate a causal effect from present to past (line 2). Similarly to ICD, \tsicd consists of a single iterative learning loop. However, differently from ICD, \tsicd uses an ordering for pairs of connected nodes, by which they are tested for independence. In each iteration, edges representing temporal relations are ordered in descending temporal distance (lines 5--6), and endogenous nodes are set to be $\obs^0$, present-time measurements (line 8). Next, RefineHomology function is called for refining temporal edges (line 7), and once concluded, it is called for refining contemporaneous edges (line 9). Finally, $r$ is increased by 1. The algorithm terminates if no conditioning sets that comply with the ICD-sep conditions are found.
The function RefineHomology tests conditional independence between pairs in a given ordered list ($\boldsymbol{E}$). Conditioning sets are constructed by calling a function PDSepRange \citep{rohekar2021iterative} that returns a list of conditioning sets that comply with the ICD-Sep conditions (line 16). If conditional independence is found between a pair $(X,Y)$, then all the edges connecting pairs in $\homology(X,Y)$ are removed (lines 21--22), and the conditioning set is recorded as a separating set for $(X,Y)$ (line 23). Finally, edges are oriented (line 25) following a set of deterministic orientation rules \citep{zhang2008completeness}, while keeping the initially set temporal orientations (line 2) fixed.

\begin{algorithm2e}[t!]
\label{alg:Alg}
\SetKwInput{KwInput}{Input}                
\SetKwInput{KwOutput}{Output}              
\SetKw{Break}{break}
\DontPrintSemicolon
\SetAlCapHSkip{0pt}
\caption{\tsicd}

  
  \KwInput{
    \\\quad $\obs$: a set of jointly observed variables
    \\\quad $\tau$: maximal time interval length
    \\\quad $\sindep$: a conditional independence oracle
    }
    
    
  \KwOutput{\\\quad $\pag$: a dynamic-PAG}

  \SetKwFunction{FMain}{Main}
  \SetKwFunction{FIter}{Iteration}
  \SetKwFunction{TestHomology}{RefineHomology}
  \SetKwFunction{FPDSepRange}{PDSepRange}

{
create $\tau$ fully-connected graphs, $\{\pag_{\mathrm{C}}^{t}|t\in[-\tau,\ldots,0]\}$, each over $\obs$ at time $t$ with `o{---}o' edges, representing contemporaneous relations
\;

initialize a dynamic PAG $\pag \assign \bigcup_{t=-\tau}^0(\pag_{\mathrm{C}}^{t})$, connect $X$o{---}\textgreater$Y$ for all $X\in\pag_{\mathrm{C}}^{t'}, Y\in\pag_{\mathrm{C}}^{t}, t'<t$\;

$done \assign \mathrm{False}$, $r \assign 0$}\;

\While{$(done=\mathrm{False})$ \& $(r \leq |\obs|-2)$}{
    assign temporal edges for refinement:
    $\boldsymbol{E}_{\mathrm{T}} \assign$ edges connecting $X\in\pag_{\mathrm{C}}^0$ and $Y\in\pag_{\mathrm{C}}^{t'}$ $\forall t'<0$\;
    
    sort $\boldsymbol{E}_{\mathrm{T}}$ in descending temporal distance\;
    
    $(\pag, done_{\mathrm{T}}) \assign \TestHomology(\boldsymbol{E}_{\mathrm{T}}, \pag, r)$\;
    
    assign contemporaneous edges for refinement:
    $\boldsymbol{E}_{\mathrm{C}} \assign$ edges between $X,Y\in\pag_{\mathrm{C}}^0$\; 
    
    $(\pag, done_{\mathrm{C}}) \assign \TestHomology(\boldsymbol{E}_{\mathrm{C}}, \pag, r)$\;
    
    
    $done \assign done_{\mathrm{T}}~\&~done_{\mathrm{C}}$\;
    $r \assign r+1$
    }
\KwRet $\pag$\;


  \SetKwProg{Fn}{Function}{:}{\KwRet}
  
  \Fn{\TestHomology{$\boldsymbol{E}$, $\pag$, $r$}}{
        $done \leftarrow \text{True}$\;
        
        \For{edge $(X, Y)$ ~in~ $\boldsymbol{E}$}{
            sets that comply with ICD-Sep conditions:
            $\{\mathbf{Z}_i\}_{i=1}^\ell \leftarrow$ \FPDSepRange($X$, $Y$, r, $\pag$)\;

            \If{$\ell > 0$}{
                  $done \leftarrow \text{False}$\; 

                \For{$i \assign 1$ \KwTo $\ell$}{
                    \If{$\sindep(X, Y| \mathbf{Z}_i)$}{
                        $\forall (V, U)\in\homology(X,Y,\pag)$ remove edge $(V, U)$ from $\pag$\;
                        
                        remove edge $(X, Y)$ from $\pag$\;
                        
                        record $\mathbf{Z}_i$ as a separating set for $(X, Y)$\;
                        
                        \Break\;
                    }
                }
            }
        }
        orient edges in $\pag$\;
        \KwRet $(\pag, done)$\;
  }
\end{algorithm2e}

\subsection{Bound on the Number of CI-Tests}

In a dynamic PAG, the number of possible contemporaneous edges is $\binom{n}{2}$, which is $O(n^{2})$ and number of possible temporal edges is $n^{2}$ for each lag (interval length) in $[1,\ldots, \tau]$, where $n$ is the number of nodes at a single time stamp. Thus, the total number of nodes is $(\tau+1)n$ and the total number of possible edges is bound by $(\tau+1)n^{2}$. The number of possible conditioning sets, having size $r$, is $\binom{(\tau+1) n-2}{r}$. A straightforward application of ICD to time-series results in an upper bound for the total number of CI-tests, $N_{ICD} (r)$, at iteration $r$ (conditioning set size),
\begin{equation}
    N_{ICD} (r) \leq (\tau+1) n^{2} \binom{(\tau+1) n -2}{r}.
\end{equation}
The proposed TS-ICD method uses autonomy identification to reduce the $(\tau+1) n$ term in the binomial coefficient that measures the number of possible conditioning sets.
Let $0<\rho< 1$ be the percentage of edges remaining after iteration $r$ (regardless of the time lag). Now recall that the edges with the longest lag are refined first, so the number of possible conditioning sets for edges with lag $\tau$ at iteration $r$ is the same as ICD, $\binom{(\tau+1) n-2}{r}$. This corresponds to edges of type A in \figref{fig:split_cases}. Next, edges with lag $\tau-1$ are tested. However there are fewer lag $\tau$ edges. So for testing lag $\tau-1$ edges, type B edges in \figref{fig:split_cases}, there are $\binom{\tau n + \rho n - 2}{r}$ possible conditioning sets. So an upper bound for the total number of CI-tests $N_{TSICD} (r)$ at iteration $r$ for TS-ICD is
\begin{equation}
    N_{TSICD} (r) \leq n^{2} \sum_{\ell=0}^{\tau} \binom{(\tau+1-\ell) n + \ell \rho n - 2}{r},
\end{equation}
where $\ell$ can be understood as $\tau-\mathrm{lag}$. It is evident that, fewer CI-tests are required by TS-ICD, compared to ICD, as $\rho$ decreases (sparser graphs) and as $\tau$ increases (considering longer-range temporal relations).

\subsection{Soundness and Completeness}

Given a perfect CI test, and given that the data is from a stationary SVAR process, the TS-ICD algorithm is sound and complete. It is sound in the sense that every independence relation and every causal relation that is entailed by the returned PAG are also present in the true underlying MAG. It is complete in the sense that every independence relations encoded by the true MAG and every causal relation that is entailed by these independence relations is also entailed by the returned PAG.

\begin{proposition}
If the distribution of variables in a stationary SVAR process is faithful to a dynamic-PAG, then TS-ICD recovers this dynamic-PAG, given a perfect CI-test. 
\end{proposition}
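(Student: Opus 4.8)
The plan is to reduce the claim to the established soundness and completeness of the ICD algorithm \citep{rohekar2021iterative}, and then separately account for the three ways in which \tsicd departs from a naive application of ICD to the unrolled dynamic graph: (i) the temporal edge orientations fixed a priori in line~2, (ii) the reordering of conditional-independence tests so that temporal edges are refined before contemporaneous ones and longer lags before shorter lags, and (iii) the homology-based batch removal of edges (lines~21--22). Concretely, I would first argue that \tsicd and ICD, run on the same SVAR distribution with a perfect CI-test, terminate with the same skeleton and the same edge-marks, and then invoke the soundness and completeness of ICD to conclude.

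For \emph{soundness}, I would show that every modification preserves correctness of each individual edge removal and orientation. Each removal performed by \tsicd is triggered by a detected conditional independence with a conditioning set drawn by PDSepRange, which satisfies the ICD-Sep conditions; since $\ICDSep(A,B)\subseteq\PDSEPset{A}{B}$ and the CI-test is perfect, the removal is sound exactly as in ICD. The homology removals require a separate justification: by stationarity together with the faithfulness and causal-Markov assumptions, any pair in $\homology(X,Y)$ has the same adjacency status and is separated by the time-shifted copy of the separating set of $(X,Y)$ \citep{entner2010causal}; hence removing the whole homology class when $(X,Y)$ is found independent removes only genuinely absent edges. Finally, the temporal orientations fixed in line~2 encode that a present measurement cannot be a cause of a past one, which is valid under the SVAR data-generating assumption and is consistent with every MAG in the equivalence class; since subsequent orientation uses the same rules as ICD \citep{zhang2008completeness} while holding these marks fixed, no invalid edge-mark is introduced.

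For \emph{completeness}, the crux is to establish the invariant that, at the end of each iteration $r$, the graph maintained by \tsicd equals, modulo the choice of one representative per homology class, the graph ICD would maintain after testing all conditioning sets of size $r$. The reordering is handled by the order-independence of the oracle skeleton: an edge is deleted iff some separating set of the appropriate size exists, and this existence is independent of the order in which pairs are examined. The reduction of the conditioning-set search during contemporaneous refinement is justified by \dffref{dff:autonomous}: the contemporaneous sub-graph over the endogenous nodes $\obs^0$ is autonomous given the exogenous (past) nodes, so $\ICDSep(A,B)$ for every contemporaneous pair $(A,B)$ is contained in $\obs_{\mathrm{Ex}}\cup\obs_{\mathrm{Au}}$; once the temporal (exogenous--endogenous) edges have been correctly refined at the current $r$, the remaining adjacencies faithfully delimit these sets, so every separating set that ICD would try is still reachable by PDSepRange when the contemporaneous edge is tested.

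The main obstacle I expect is making this invariant fully rigorous in the presence of both effects at once, namely that the PDS-path--based $\ICDSep$ sets computed inside the reordered and homology-reduced graph still contain every true separating set by the time each pair is examined. Since PDS-paths depend on the current partially-learned graph, I would need a lemma showing that the homology-reduced representation is a faithful encoding of the full dynamic graph at every step---so that a PDS-path present in the full unrolled graph is witnessed, up to a time shift, by a PDS-path among the retained representatives---and that the per-iteration ordering (descending lag, temporal before contemporaneous) never deletes an edge whose conditioning set has not yet become available. Establishing this lemma, together with matching termination of the two algorithms at the same $r$, is where the bulk of the technical work lies; the remaining steps then follow from the corresponding guarantees already proved for ICD \citep{rohekar2021iterative}.
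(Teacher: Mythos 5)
Your proposal takes essentially the same route as the paper's own proof: reduce to the soundness and completeness of ICD, note that ICD's guarantee is insensitive to the order in which CI tests are performed, and use the PAG-autonomy definition (\dffref{dff:autonomous}) to argue that the restricted conditioning-set search during the exogenous--endogenous refinement still covers every set satisfying the ICD-Sep conditions. In fact, your plan is more explicit than the paper's brief argument --- you separately justify the fixed temporal orientations, the homology-based batch removals, and the dependence of PDS-paths on the partially-learned graph, all of which the paper's proof leaves implicit.
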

\begin{proof}
In the extreme case where all the nodes in the dynamic-PAG are considered endogenous (no exogenous nodes), the TS-ICD algorithm reduces to the ICD algorithm, which is sound and complete \citep{rohekar2021iterative}. Note that the proof for ICD considers an arbitrary ordering of CI testing as long as all subsets of all the nodes complying with ICD-Sep conditions are considered for conditioning.
In the general case, the split into exogenous and endogenous nodes results in an autonomous sub-graph composed of the exogenous nodes, and an autonomous sub-graph composed of the endogenous nodes conditioned on the exogenous nodes. By definition \citep{yehezkel2009rai}, autonomous sub-graphs (along with its conditioning set) include all required conditioning sets for identifying conditional independence between connected nodes in that autonomous sub-graph. In \dffref{dff:autonomous} we extended this definition to PAG such that an autonomous sub-graph (along with its conditioning set) includes sets complying with ICD-Sep conditions for its nodes. This ensures the soundness and completeness.
\end{proof}

\section{Empirical Evaluation}

We empirically examine the presented \tsicd algorithm, and compare it to 3 recently presented algorithms: SVAR-FCI \citep{malinsky2018causal}, SVAR-RFCI \citep{gerhardus2020high, colombo2012learning}, and LPCMCI ($k=4$) \citep{gerhardus2020high}. 
For implementation and empirical evaluation we used \emph{Causality Lab}: \url{github.com/IntelLabs/causality-lab}, and \emph{Tigramite}: \url{github.com/jakobrunge/tigramite} packages.

As LPCMCI was shown to achieve state-of-the-art results, we closely follow the numerical evaluation procedure used for evaluating it \citep{gerhardus2020high}. Specifically, we simulate data from two time series, one with continuous variables, and another with binary variables. In the first time-series, each variable is a linear combination of its parents and a normally distributed additive noise (a linear-Gaussian setting). The second time-series is a linear discretized into binary variables version of the first time-series. For more details on these data-generating models see \citet[Sections 4 and S9]{gerhardus2020high}. 

The parameters of each of the two time-series models is set as follows. We set $70\%$ of total variables as observed ($\obs$), and $30\%$ latent ($\lat$). We set the number of observed variable for a time-stamp to be 5. We set a temporal window of 4 time-stamps, $[t-3, t-2, t-1, t]$, (total of 20 observed variables). The length of the time series is set to 500; that is, 500 samples, each consisting of 5 variables. We sample 500 different models of these time-series, where the coefficients are sampled as follows. Auto-correlation coefficient (between previous and current measurements of a variable) is sampled uniformly from $[0.6, 0.9]$. Coefficients of additional 5 relations are sampled uniformly from $\pm [0.2,0.8]$. The percentage of contemporaneous relations is set to $30\%$ of the relations. 

For each algorithm, we use a CI-test based on partial correlation for the linear-Gaussian time-series, and G-square test for the discrete values time-series, with a $p$-value threshold $\alpha=0.01$. In addition, although the temporal time-window is 4 time-stamps, we use a window of 6 time-stamps $[t-5,\ldots, t]$ for the algorithms (a total of 30, 60, and 90 nodes in the learned graphs).

We evaluate three measures of performance for each of the tested algorithms: (1) F1-score for the skeleton accuracy, (2) causal accuracy, which is the percentage of correctly oriented edges, (3) required number CI tests. Since run-time may vary depending on implementation, and since CI-tests are generally the main time-consuming part, we report the number of CI tests instead of run-times. Results of the 500 instances of the time-series are summarized by the median and MAD (mean absolute deviation from median) values. Statistical significance is tested using the Wilcoxon signed-rank test at significance level $\alpha=0.05$.

\subsection{Ablation Study: Temporal before Contemporaneous?}

TS-ICD learns temporal causal relations before learning contemporaneous causal relations, and refines temporally longer-range relations before shorter-range relations. We examine two alternative orderings, different from TS-ICD's, by which pairs of nodes are tested for independence: (1) `swapped' and (2) random. In the `swapped' variant, contemporaneous relations are refined first, and the ordering for temporal relations is reversed. This ordering is expected to yield the worst results. In the `random' variant, the ordering for temporal and contemporaneous relations is shuffled. This is equivalent to a naive application of the ICD algorithm to time-series data. We then forced the TS-ICD algorithm to use these orderings and measured the performance. 

In \tabref{tab:ablation} we report median and MAD values of performance measures, summarizing 500 random instances of the binary-variables time-series model. The use of the `swapped' ordering resulted in performing the highest number of CI tests and the worst causal accuracy. TS-ICD achieved the highest causal accuracy and required the lowest number of CI tests. The superiority of TS-ICD in these cases was found to be statistically significant. We did not find any significant difference in the skeleton F1-score. To better examine the significance of the results, we provide in Appendix \ref{apx:ablation} scatter plots for the `random' and `swapped' variants in \figref{fig:random_scatter} and \figref{fig:swapped_scatter}, respectively. We found that TS-ICD and the tested variants required very few CI tests with conditioning sets higher than 1. This might be the reason why there is no significant difference in the skeleton accuracy.  Nevertheless, it is evident from the scatter plots that for most cases, TS-ICD requires fewer CI-tests and has higher causal accuracy compared to the tested variants.

\begin{table}
\centering
\small
\begin{tabular}{cccc}
    \toprule
    Variant & Skeleton F1 & Causal Acc. & \# CI-tests \\
    \midrule
    random  & 0.4744 & {0.2158} & {388}  \\
    & {\small(0.1403)} & {\small(0.1109)} & {\small(357)} \\
    \midrule
    swapped & 0.4744 & 0.2080 & 413  \\
     & {\small(0.1398)} &  {\small(0.1040)} &  {\small(356)} \\
    \midrule
    TS-ICD  & 0.4744 & \textbf{0.2321} & \textbf{371}  \\
    & {\small(0.1439)} & {\small(0.1186)} & {\small(363)} \\
    \bottomrule
\end{tabular}
\caption{Ablation study results. Median (and MAD) skeleton F1 score, causal accuracy, and number of CI tests are reported. In the `swapped' version, the contemporaneous edges were CI-tested before the temporal edges. In the `random' version, a random ordering for CI-testing was used. 
Results in bold are statistically significant at $\alpha=0.05$.}
\label{tab:ablation}
\end{table}

\begin{figure}[h!]
\centering
(a)\includegraphics[width=0.43\textwidth]{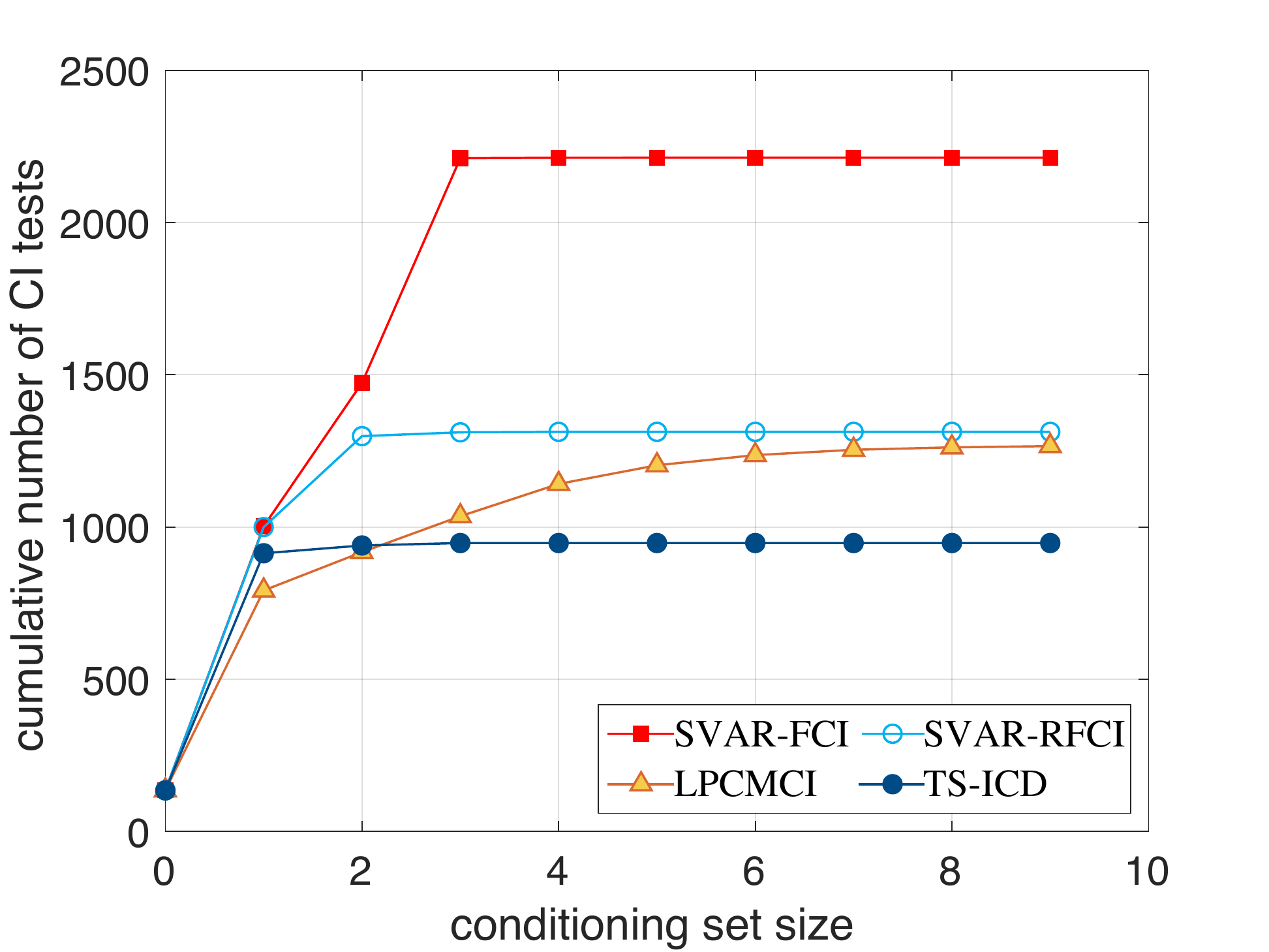}
(b)\includegraphics[width=0.43\textwidth]{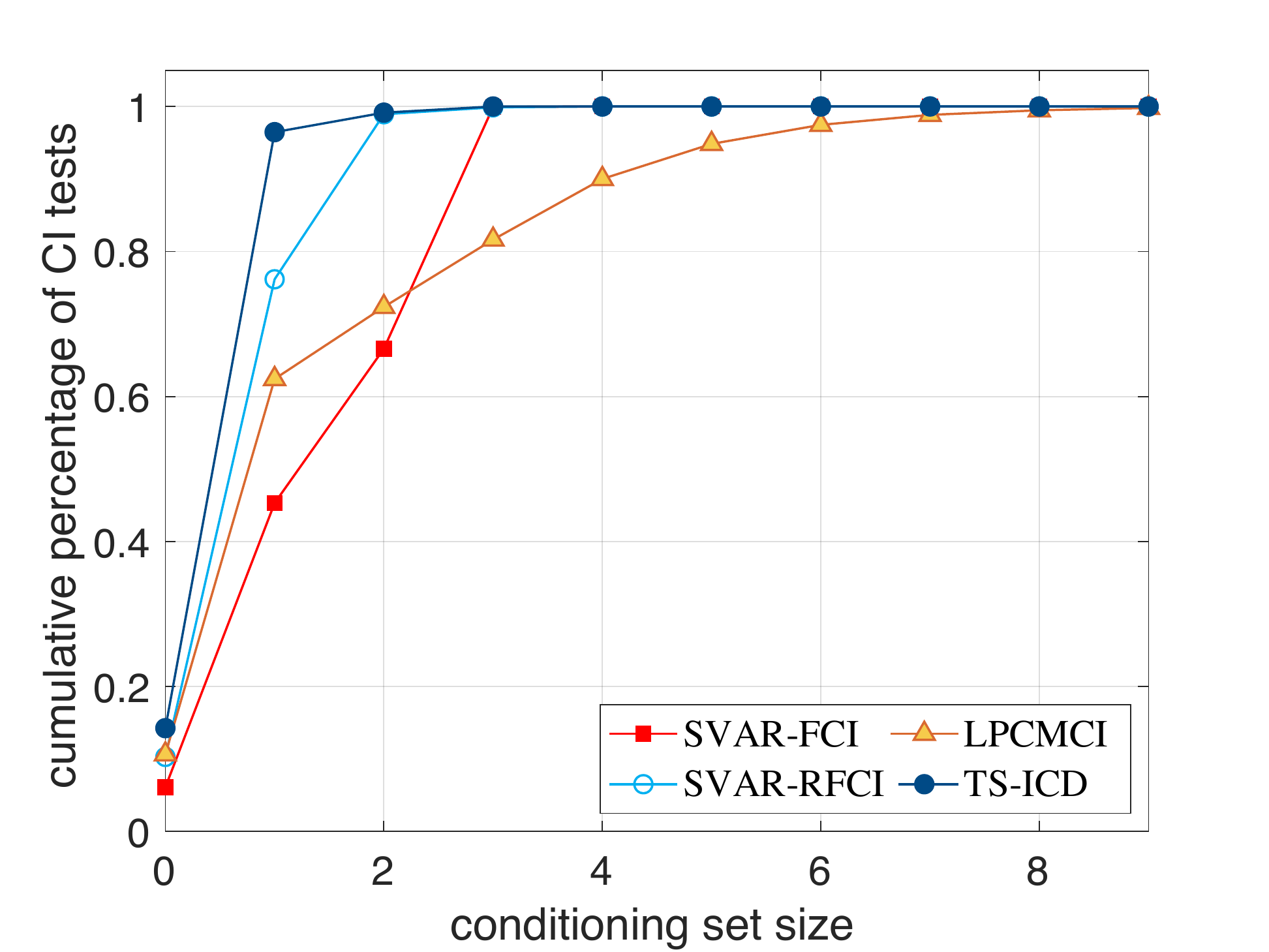}
\caption{Cumulative number (a) and percentage (b) of conditioning set sizes used by the CI tests. From (a) it is evident that TS-ICD requires the fewest number of CI tests in total. While, up to conditioning set size 2, both TS-ICD and LPCMCI require a similar number of CI tests, LPCMCI requires more than 300 additional CI tests, all with larger conditioning set sizes. From (b) it is evident that TS-ICD employs the highest percentage of conditioning set sizes up to 1, whereas SVAR-FCI employs the lowest percentage, and LPCMCI employs the lowest percentage of conditioning set sizes up to 3. This should affect accuracy of CI-test that suffer from the curse-of-dimensionality.}
\label{fig:cumulateNCI}
\end{figure}

\subsection{Comparison to Other Algorithms\label{sec:synthetic_experiment}}

\begin{table*}
\centering
\small
\begin{tabular}{l|ccc|ccc}
    \toprule
    & \multicolumn{3}{c|}{linear-Gaussian} & \multicolumn{3}{c}{discrete (binary)} \\
    Algorithm & Skeleton F1 & Causal Acc. & \# CI tests & Skeleton F1 & Causal Acc. & \# CI tests \\
    \midrule
    SVAR-FCI     & 0.3871            & 0.1987 & 1636 & \underline{0.3602}    & 0.2014 & 554\\
    & (0.1582) & (0.1421) & (1110) & (0.2438) & (0.2357) & (506) \\
    \midrule
    SVAR-RFCI    & 0.4118            & 0.2083 & \underline{1184} & \underline{0.3602}    & \underline{0.2083} & \underline{550}\\
    & (0.1488) & (0.1428) & (436) & (0.2507) & (0.2602) & (132) \\
    \midrule
    LPCMCI      & \textbf{0.5525}            & \textbf{0.2948} & 1192 & 0.2740                & 0.1443 & 1351\\
    & (0.1582) & (0.1908) & (368) & (0.2025) & (0.2219) & (1010) \\
    \midrule
    TS-ICD      & \underline{0.4838}    & \underline{0.2500} & \textbf{875} & \textbf{0.4744}       & \textbf{0.2321} & \textbf{371}\\
    & (0.0944) & (0.0987) & (282) & (0.1439) & (0.1186) & (363) \\
    \bottomrule
\end{tabular}
\caption{Results for learning a dynamic-PAG from a 500-samples long time-series. Median skeleton F1 score, causal accuracy, and number of CI tests. Highest accuracy values and lowest number of CI tests are marked bold, and second best are underlined.}
\label{tab:parcorr500}
\end{table*}

\begin{table}
\centering
\small
\begin{tabular}{l|cc|cc|cc}
    \toprule
    & \multicolumn{2}{c|}{Skeleton F1} & \multicolumn{2}{c|}{Causal Acc} & \multicolumn{2}{c}{\# CI tests}\\
    \# nodes & 60 & 90 & 60 & 90 & 60 & 90 \\
     \midrule
    LPCMCI & \textbf{0.44}	& {0.38} &	0.19 &	0.16 &	4907 &	11164\\
    TS-ICD & 0.41 &	{0.38} &	\textbf{0.23} &	\textbf{0.21} &	\textbf{3722} &	\textbf{8704}\\
    
    \bottomrule
\end{tabular}

\caption{Skeleton F1 score, causal accuracy, and number of CI tests (median over 500 graphs).} 
\label{tab:large_graphs}
\end{table}

In this section, the accuracy and computational complexity of the presented \tsicd algorithm is compared, using synthetic data, to the LPCMCI \citep{gerhardus2020high}, SVAR-FCI \citep{malinsky2018causal}, and SVAR-RFCI \citep{gerhardus2020high, colombo2012learning} algorithms. First, we evaluate the number of CI tests and their conditioning set sizes required by each algorithm for the linear-Gaussian time-series model. In \figref{fig:cumulateNCI} we provide the cumulative number and the cumulative percentage of CI-tests as a function of the conditioning set size. It is evident that \tsicd required the fewest number of CI tests and employed the highest percentage of small conditioning sets, up to size 1.

In \tabref{tab:parcorr500} we report performance measures for linear-Gaussian and binary-variable time-series. In the linear-Gaussian case LPCMCI achieved the highest skeleton F1-score and causal accuracy, and \tsicd achieved higher skeleton F1-score and causal accuracy than the other algorithms. In the binary-variable case \tsicd achieved the highest skeleton F1-score and causal accuracy, and LPCMI achieved the lowest. This is due to the sensitivity of G-square CI-test to the curse-of-dimensionality. For both cases TS-ICD required fewest CI tests. We further investigated the difference in performance between LPCMCI and TS-ICD in the linear-Gaussian case by increasing the number of nodes. We found that as graph sizes increase, the advantage of LPCMCI decreases, and for large graphs, having 90 nodes, TS-ICD outperforms LPCMCI (\tabref{tab:large_graphs}).

\section{Application to Real-World Data}

In this section we evaluate the application of TS-ICD to real-world data. To this end, we use the same data and analysis as by \citet{gerhardus2020high}, and compared to SVAR-FCI, SVAR-RFCI, and LPCMCI. The data is average daily discharges of rivers in the upper Danube basin. Measurements were made available by the Bavarian Environmental Agency at \url{https://www.gkd.bayern.de}. 
It includes measurements in three sites: Iller at Kempten $X$, Danube at Dillingen $Y$, and Isar at Lenggries, $Z$. See \figref{fig:river_sketch} for a schematic description of the rivers' flows and measurement points. For clarity, we recall the analysis and plausible causal relations made by \citet{gerhardus2020high}. The Iller discharges into the Danube upstream of Dillingen, where the water from Kempten reaches Dillingen in a day. Since the measurements are daily, the causal relation $X\longrightarrow Y$, is expected to be contemporaneous. The Isar discharges into the Danube downstream of Dillingen, and therefore there should be no causal relations between $Z$, and $X$ and $Y$. Nevertheless, there may be hidden confounders, such as rainfall, that may render these variables dependent. 

First, we called the algorithms using $p$-value threshold $\alpha=10^{-2}$. Resulting PAGs are given in \figref{fig:rivel-2}. 
SVAR-FCI learned the PAG $Y_{t-1}\longrightarrow Y_t \circleftrightarrow X_t \circleftrightarrow Z_t \circleftarrow Z_{t-1}$, and was omitted from the figure. Although it correctly identified the correlation between $X_t$ and $Y_t$, it did not identify the causal relation, and missed other relations that other algorithms commonly identified.
SVAR-RFCI and LPCMCI identified the expected causal relation $X_t\longrightarrow Y_t$, but LPCMCI also identified two causal relations that might not have a plausible explanation: $Z_t \longrightarrow Y_t$, and $X_{t-2}\longrightarrow Y_t$. TS-ICD returns a denser graph with bi-directed edges, which plausibly indicates the presence of weather-related hidden confounders. Next, we called the algorithms with a smaller threshold $\alpha=10^{-5}$, for which sparser graphs are expected. Resulting PAGs are given in \figref{fig:river-5}, where SVAR-FCI PAG was omitted as it was identical to the one learned with $\alpha=10^{-2}$. Both LPCMCI and TS-ICD omitted the relation between $Z_t$ and $Y_t$, and identified the expected relation $X_t\longrightarrow Y_t$, which SVAR-RFCI missed. The three algorithms still identified a confounder $Z \longleftrightarrow X$, but SVAR-RFCI omitted $Y_{t-2}\longleftrightarrow Y_t$ that is expected from weather conditions that may last more than a day. An important difference between TS-ICD, SVAR-RFCI and LPCMCI is the temporal relation between past $X$ and present $Y$ measurements. LPCMCI identified a relation $X_{t-2}\longrightarrow Y_t$, and no relation between $X_{t-1}$ and $Y_t$, whereas TS-ICD and SVAR-RFCI identified $X_{t-1} \longleftrightarrow Y$, which is more plausible. \tabref{tab:num_ci} summarizes the required number of CI tests, per conditioning set size. TS-ICD required significantly fewer CI-tests than all other algorithms, per conditioning set size, and in total.

\begin{figure*}
\centering
\includegraphics[width=0.8\textwidth]{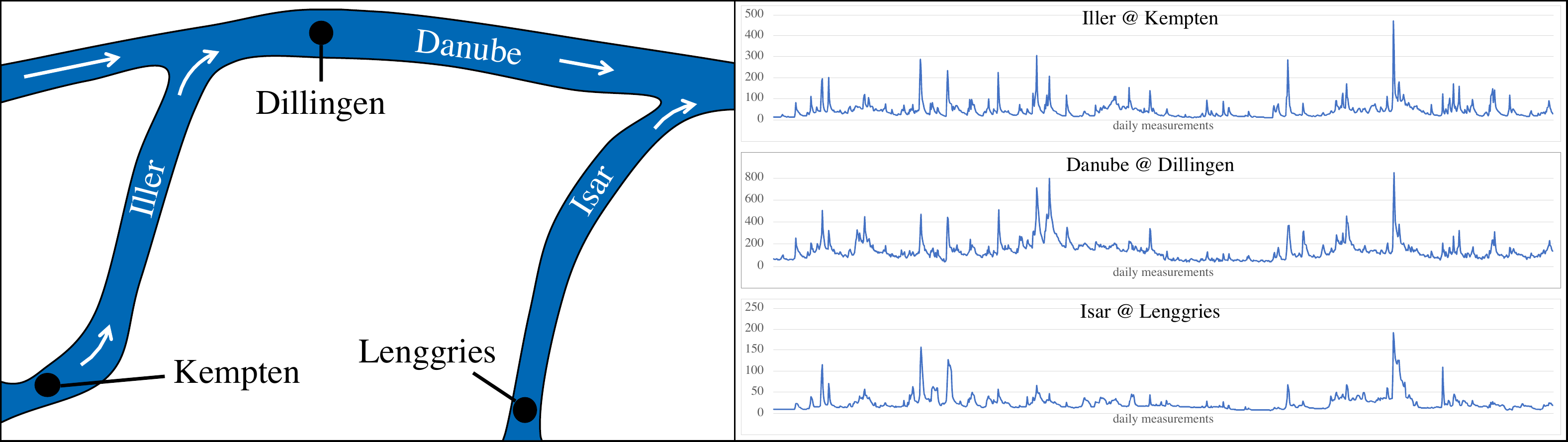}
\caption{A sketch of the flows of Iller, Isar, and Danube, and measurement points at Kempten ($X$), Dillingen ($Y$), and Lenggries ($Z$). Water from Kempten reaches Dillingen withing a day, thus a comtemporaneous relation is expected. In addition, measurements at Lenggries should not be effected by the flows of Iller and Danube. However, weather conditions, such as rainfall, may confound the measurements in all three sites.}
\label{fig:river_sketch}
\end{figure*}

\begin{figure}
\centering
(a)\includegraphics[width=0.4\textwidth]{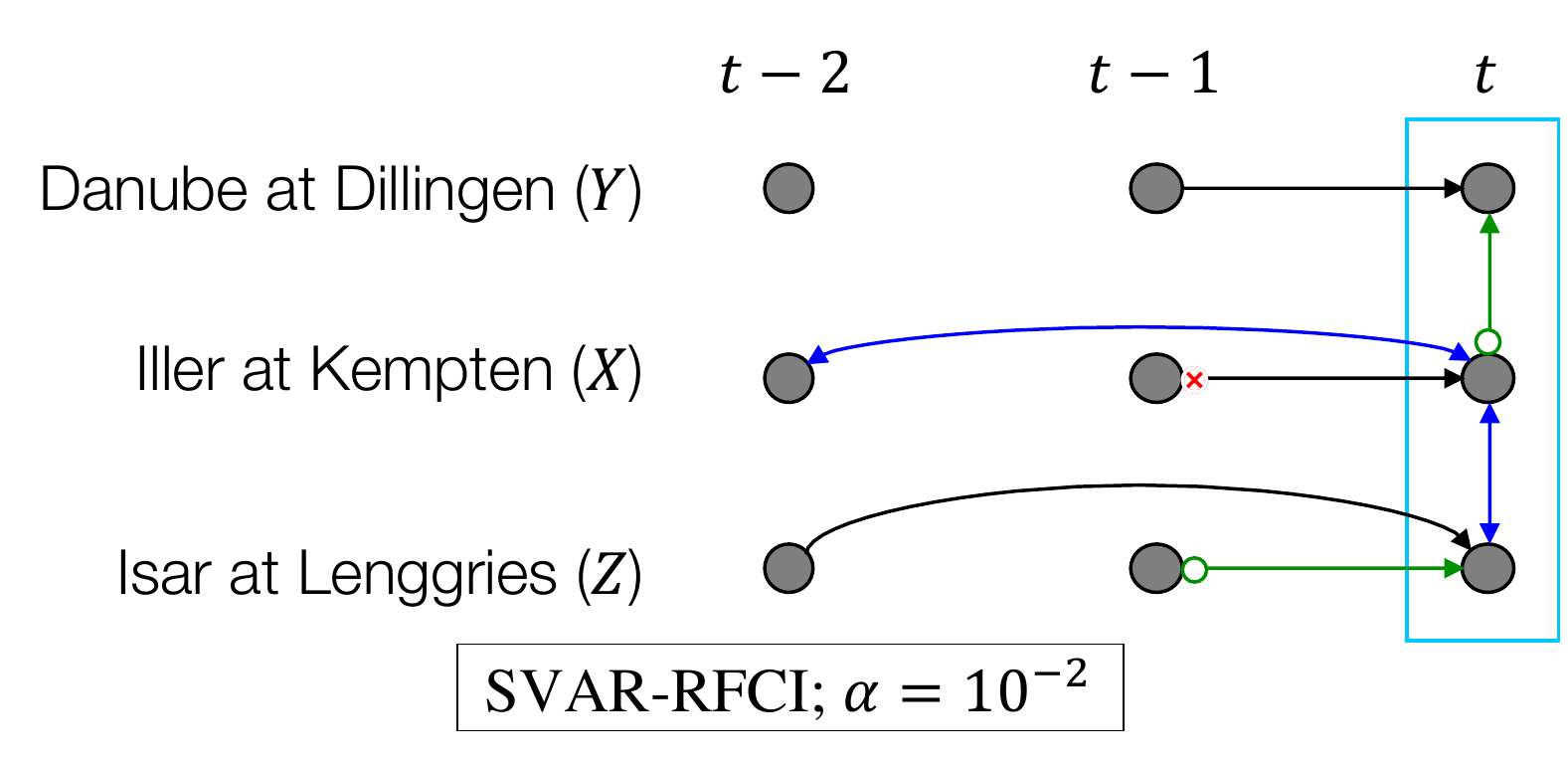}
(b)\includegraphics[width=0.4\textwidth]{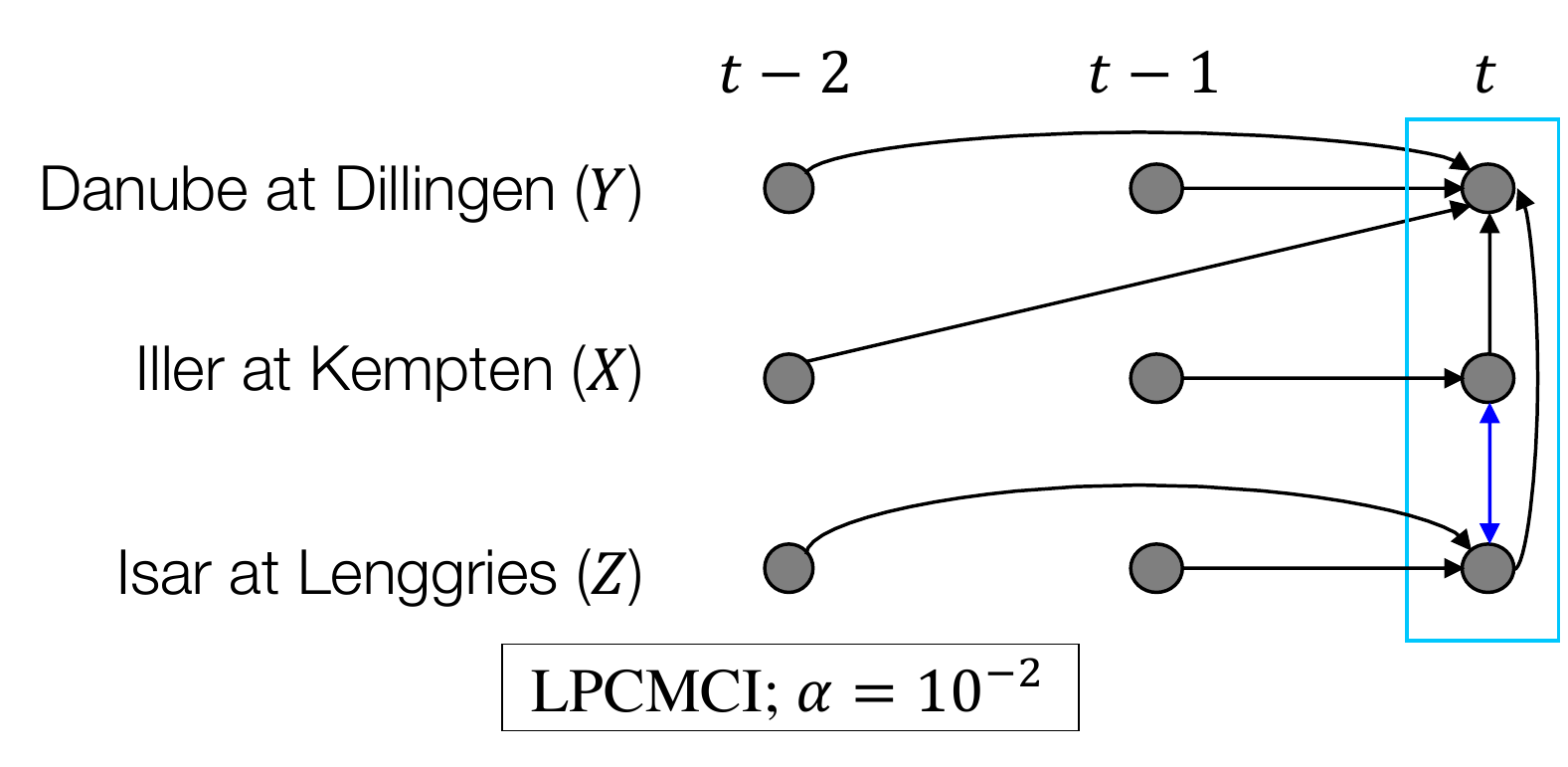}
(c)\includegraphics[width=0.4\textwidth]{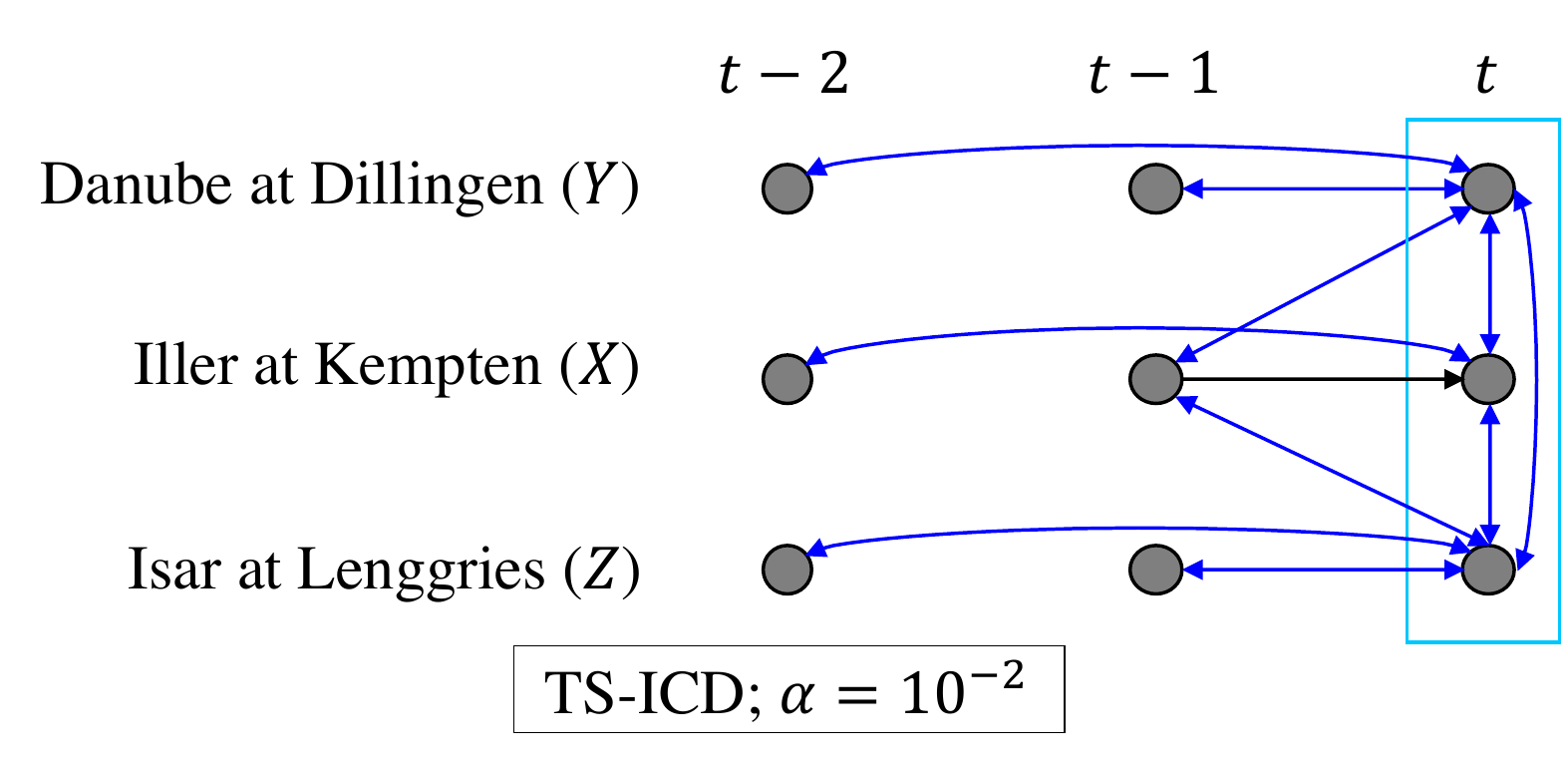}
\caption{PAGs returned at $\alpha=10^{-2}$ by (a) SVAR-RFCI, (b) LPCMCI, and (c) TS-ICD. Contemporaneous relations are bounded by a rectangle. Only one edge from each Homology is depicted. A red $\times$ indicates a conflict in determining the edge-mark.}
\label{fig:rivel-2}
\end{figure}

\begin{figure}
\centering
(a)\includegraphics[width=0.4\textwidth]{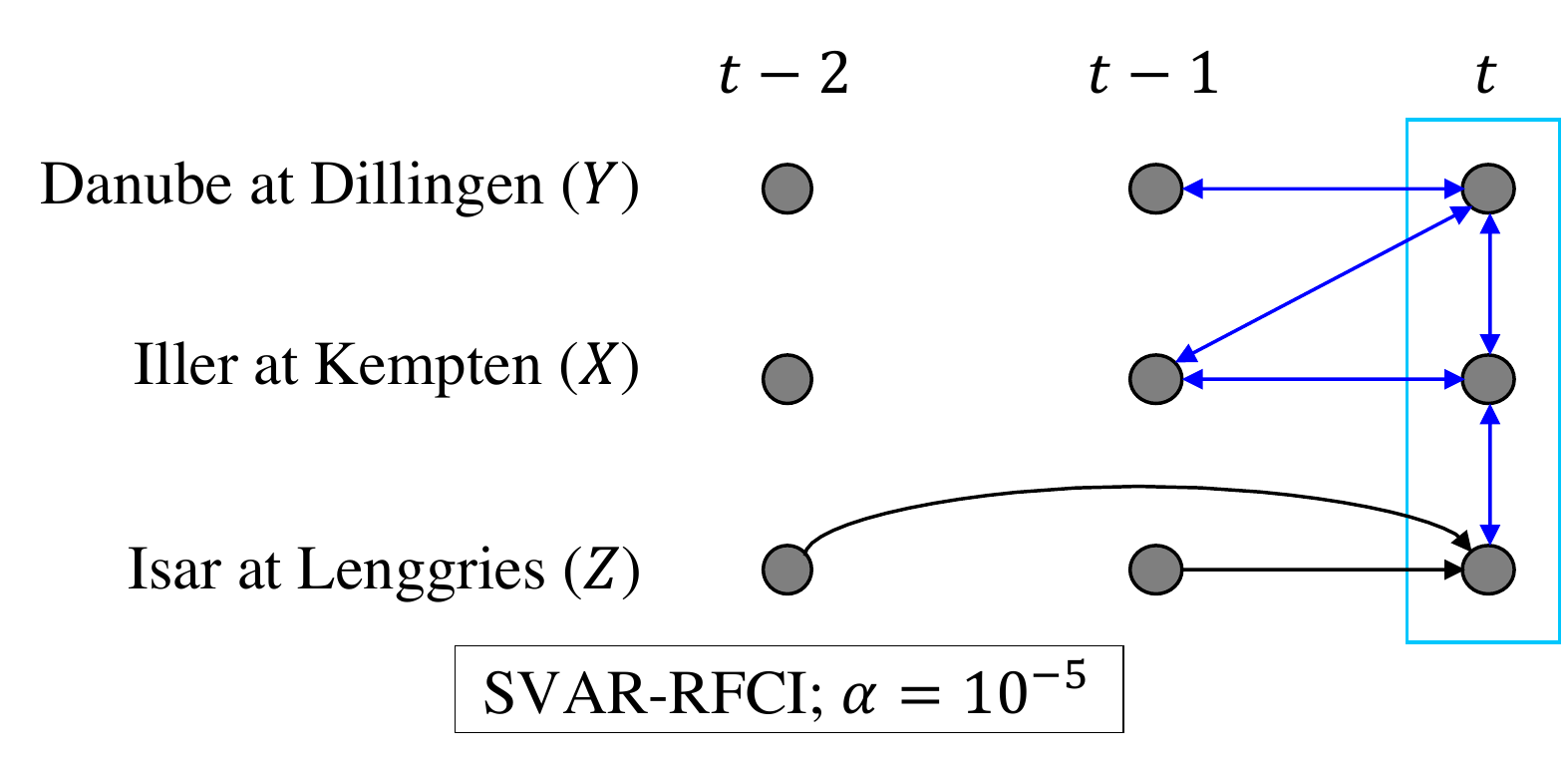}
(b)\includegraphics[width=0.4\textwidth]{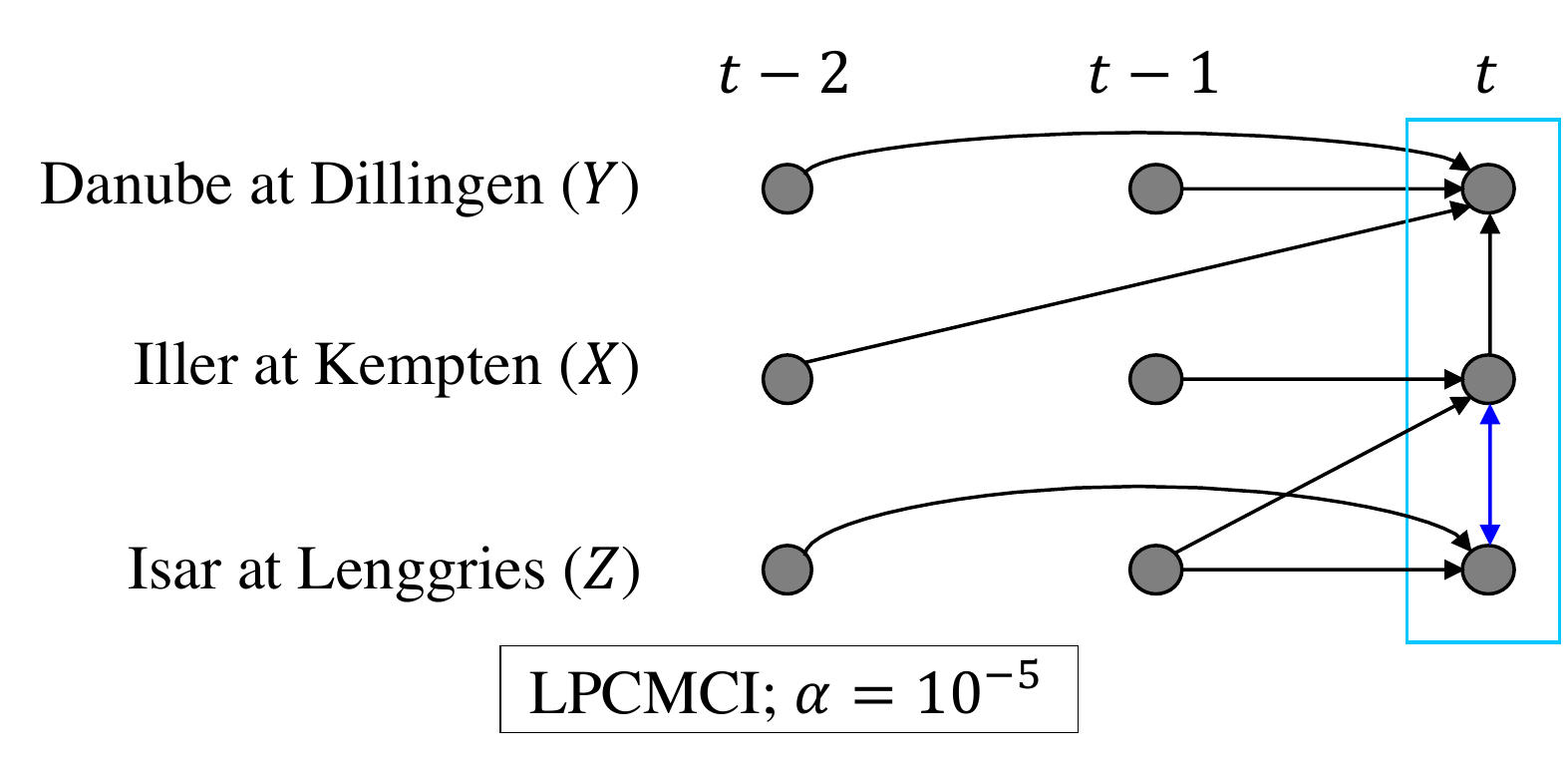}
(c)\includegraphics[width=0.4\textwidth]{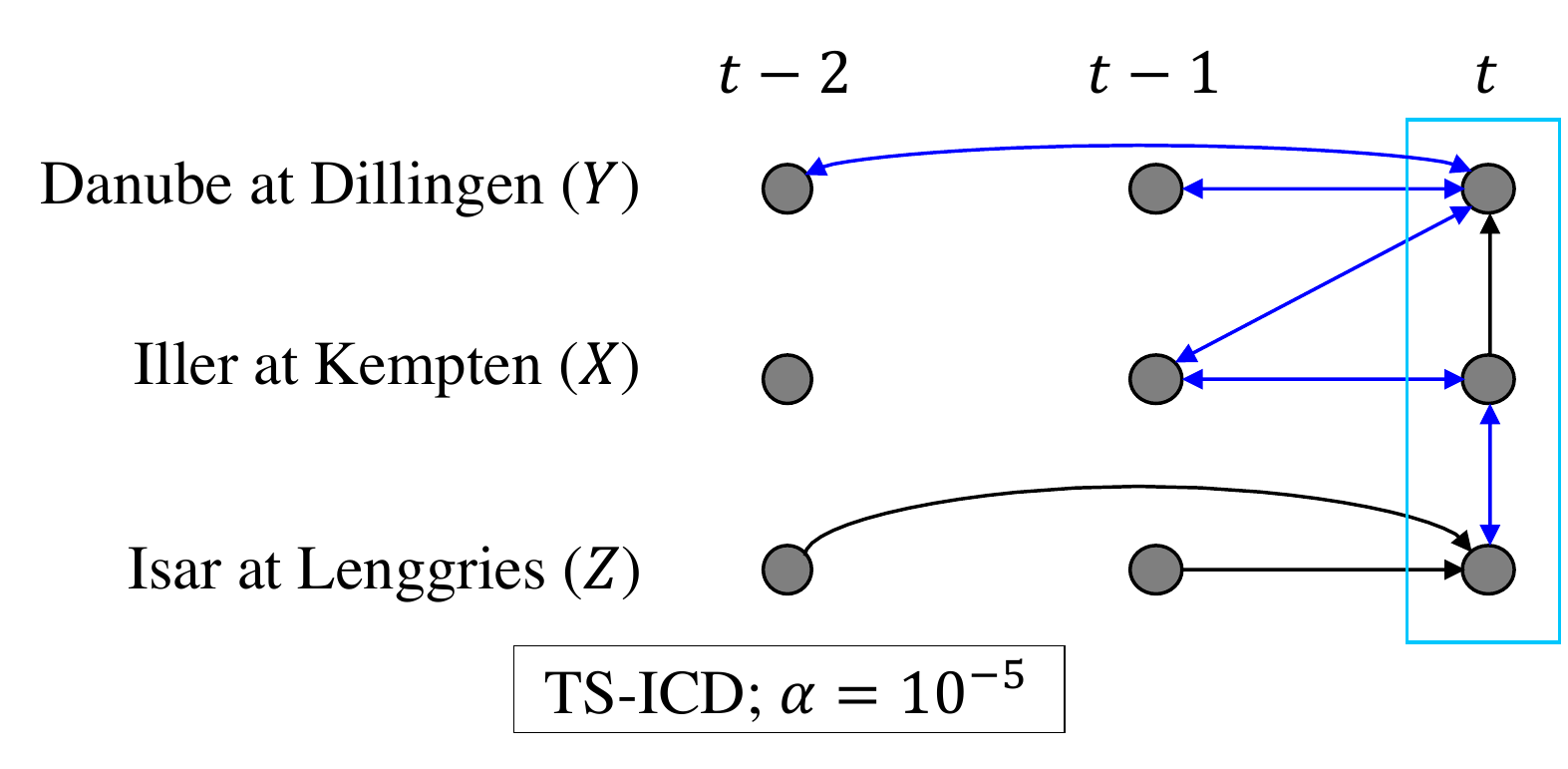}
\caption{PAGs returned at $\alpha=10^{-5}$ by (a) SVAR-RFCI, (b) LPCMCI, and (b) TS-ICD. Contemporaneous relations are bounded by a rectangle. Only one edge from each Homology is depicted.}
\label{fig:river-5}
\end{figure}

\begin{table*}
\centering
\small
\begin{tabular}{lrrrrrrrrc||rrrrrrrrc}
    \toprule
    & \multicolumn{8}{c}{Conditioning Set Size; $\alpha=10^{-2}$} & Total & \multicolumn{8}{c}{Conditioning Set Size; $\alpha=10^{-5}$} & Total\\
    Algorithm & 0 & 1 & 2 & 3 & 4 & 5 & 6 & 7 & & 0 & 1 & 2 & 3 & 4 & 5 & 6 & 7\\
     \midrule
     SVAR-FCI  & 21 & 115 & 177 & 192 & 144 & 84 & 28 & 4 & 765 & 21 & 103 & 177 & 183 & 140 & 84 & 28 & 4 & 740\\
     SVAR-RFCI & 21 & 115 & 86 & 23 & 4 & 0 & 0 & 0 & 249 & 21 & 103 & 48 & 11 & 0 & 0 & 0 & 0 & 183\\
    LPCMCI & 21 & 107 & 71 & 53 & 23 & 17 & 4 & 0 & 296 & 21 & 91 & 22 & 15 & 6 & 4 & 1 & 0 & 160\\
    TS-ICD & 21 & \textbf{97} & \textbf{2} & \textbf{0} & \textbf{0} & {0} & {0} & 0 & \textbf{135} & 21 & \textbf{78} & \textbf{2} & \textbf{0} & {0} & {0} & {0} & 0 & \textbf{107}\\
    
    \bottomrule
\end{tabular}
\caption{Number of CI tests counted for each conditioning set size.}
\label{tab:num_ci}
\end{table*}

\section{Conclusions}

In this paper we presented the \tsicd algorithm for efficiently learning from time-series data in the presence of latent confounders. By splitting the nodes into exogenous and endogenous, \tsicd creates an ordering according to which pairs of nodes are tested for conditional independence, which leads to a reduction in the number of CI tests. Our empirical evaluation validates reduction in the overall number of CI-tests, and demonstrates an increase in the accuracy of learned causal graphs, compared to a naive application of the ICD algorithm. The empirical evaluation also demonstrates that \tsicd mitigates the curse of dimensionality of CI-tests better than other algorithms, and returns more plausible causal graphs for real-world data.

\newpage
\bibliography{TSICD}
\bibliographystyle{icml2023}

\newpage
\appendix
\section{Ablation: Temporal before Contemporaneous?\label{apx:ablation}}

In this section, scatter plots are provided (\figref{fig:random_scatter}, \figref{fig:swapped_scatter}) for evaluating the performance as a function of the refinement ordering.

\begin{figure}[b!]
\centering
(a)\includegraphics[width=0.90\columnwidth]{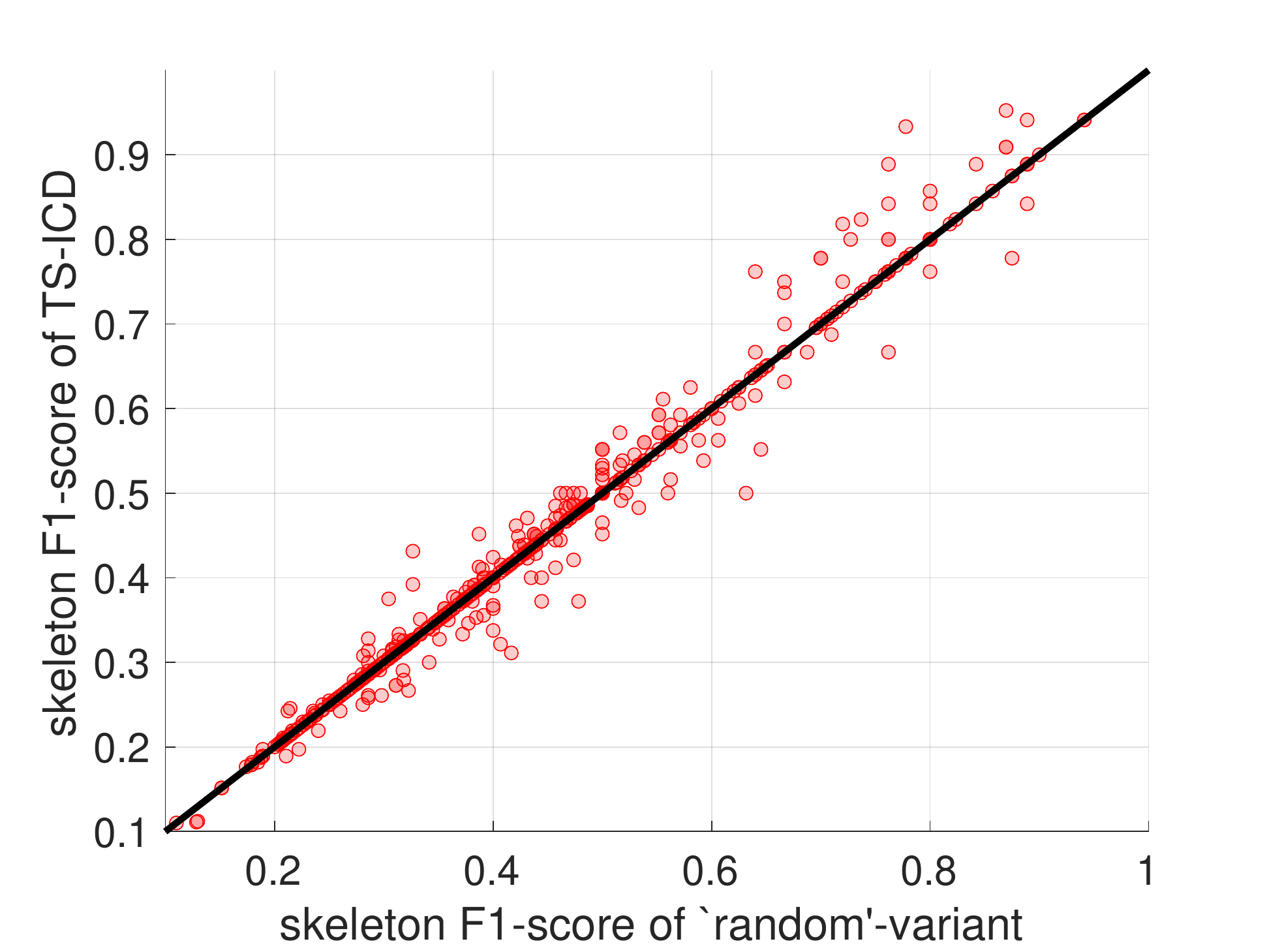}\\
(b)\includegraphics[width=0.90\columnwidth]{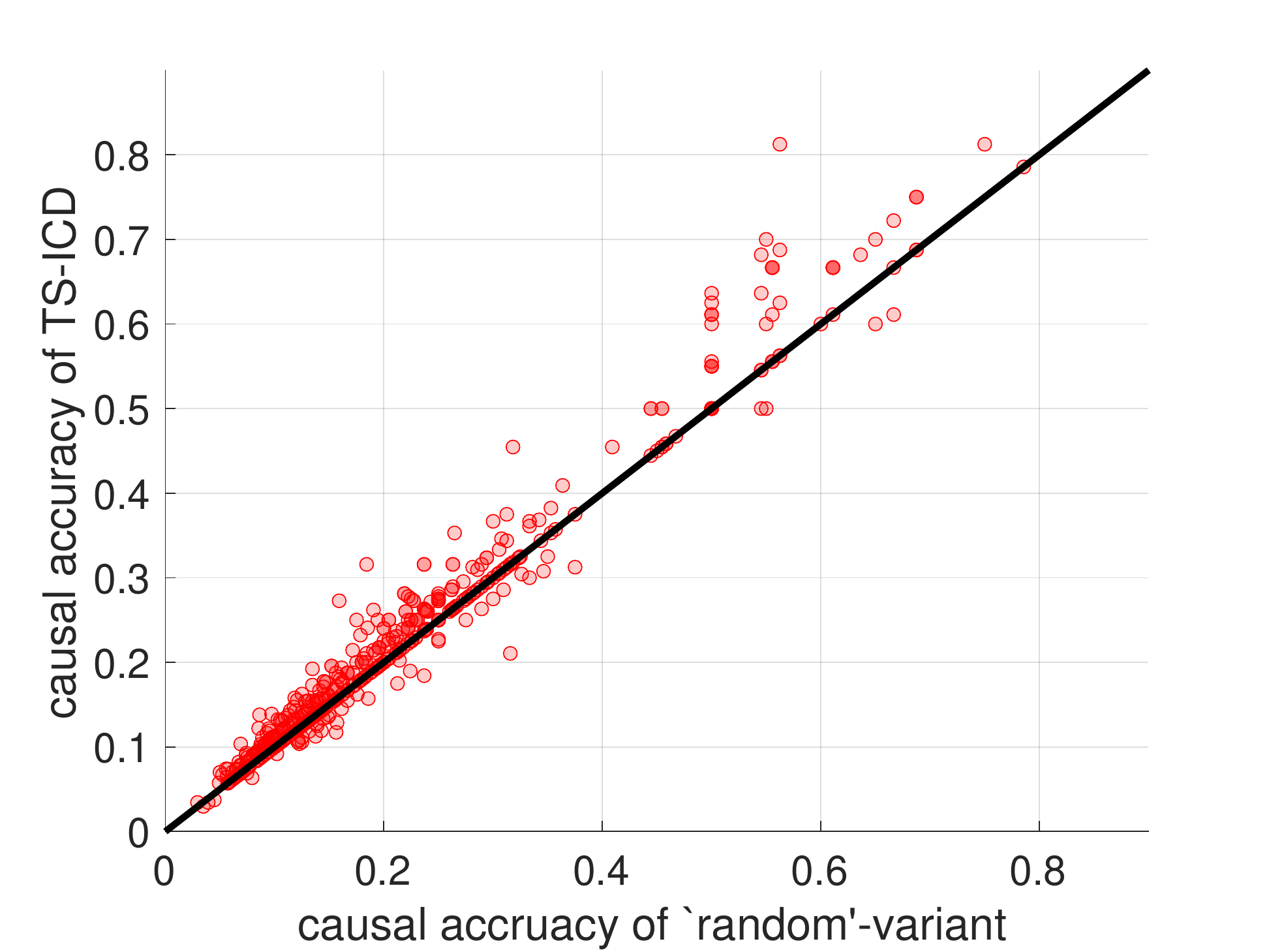}\\
(c)\includegraphics[width=0.90\columnwidth]{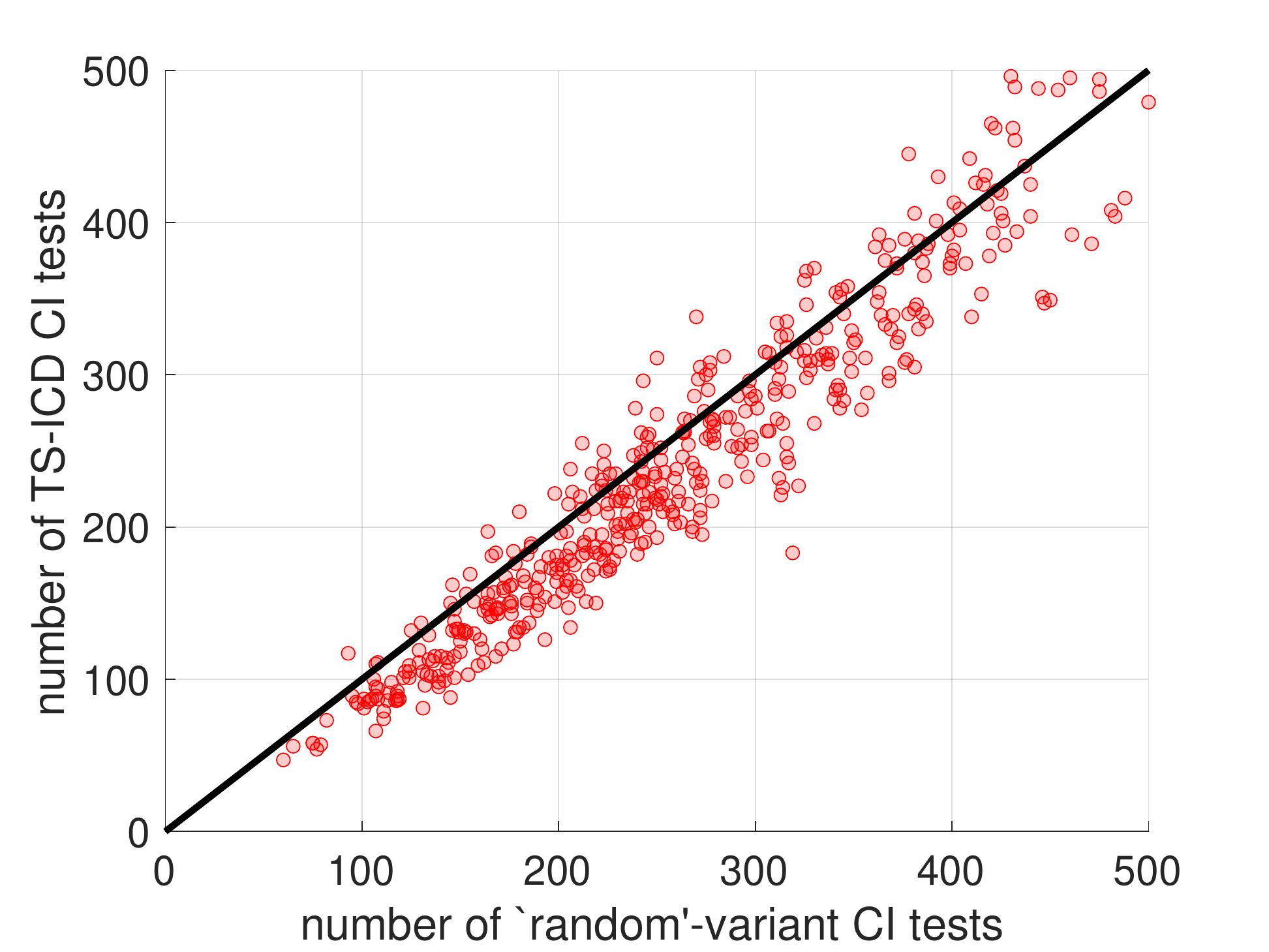}
\caption{What if the order of edges for CI testing is random? Scatter plots for `random' (order of edges) variant vs. TS-ICD for: (a) F1-score for skeleton, (b) causal accuracy, (c) number of CI tests. Difference in skeleton F1-scores is not statistically significant. TS-ICD achieves higher causal accuracy and requires fewer CI tests, with statistical significance.}
\label{fig:random_scatter}
\end{figure}

A, `random order' is equivalent to applying ICD algorithm \citep{rohekar2021iterative} to time-series data, and a reversed order is application of the TS-ICD algorithm using a reversed refinement order, where long-range relations are refined after shorter range ones.

\begin{figure}[b!]
\centering
(a)\includegraphics[width=0.90\columnwidth]{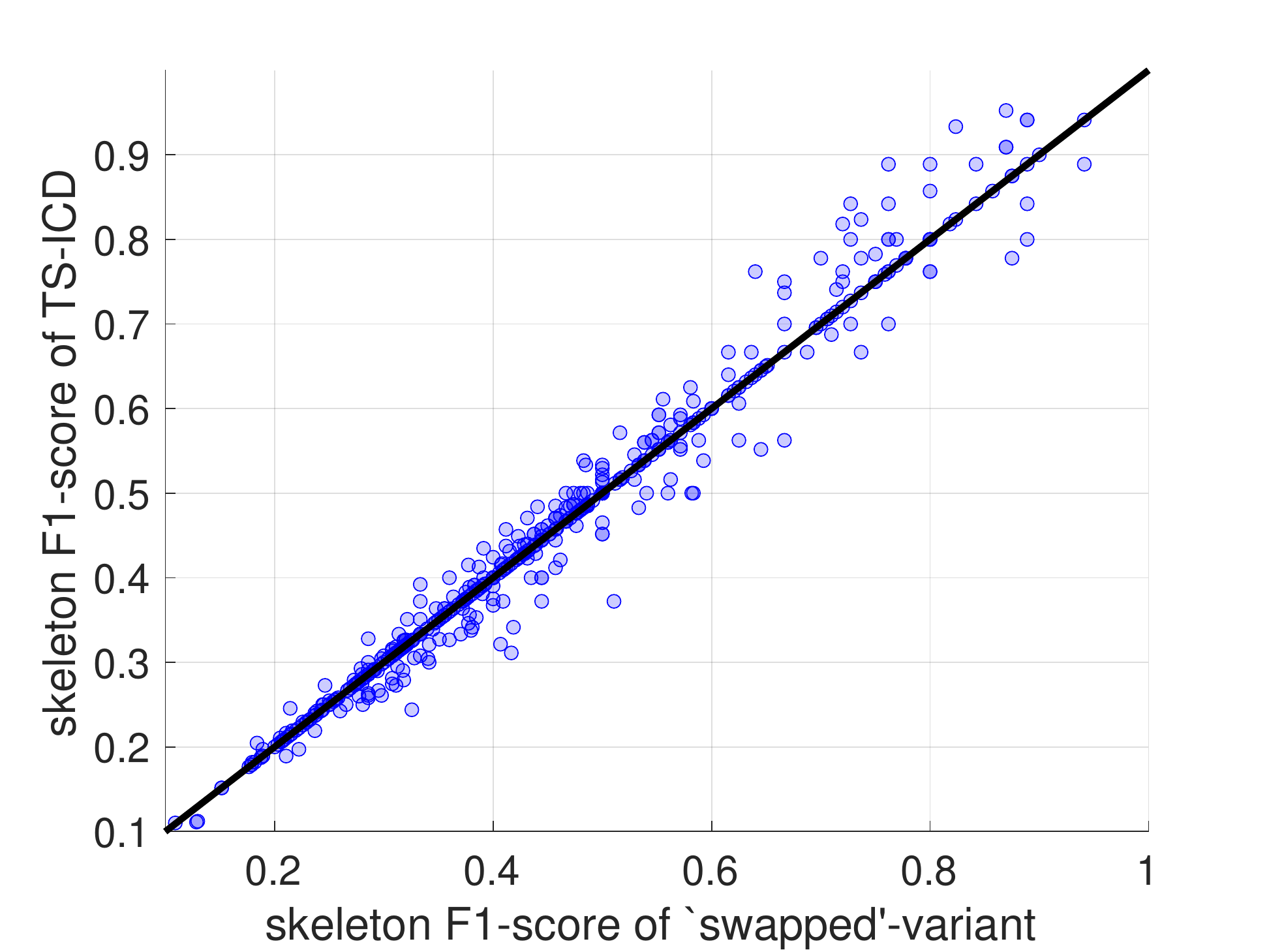}
(b)\includegraphics[width=0.90\columnwidth]{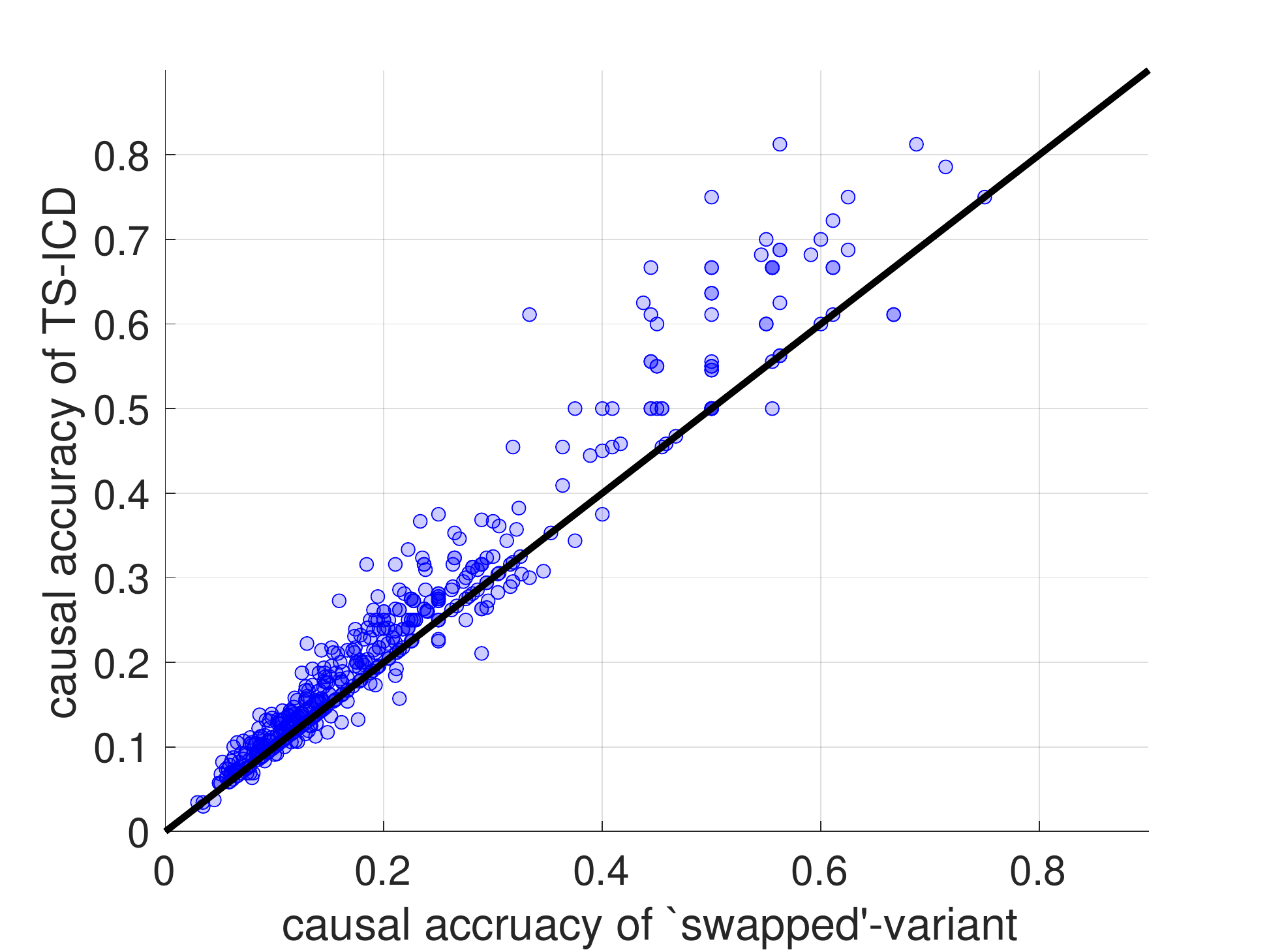}
(c)\includegraphics[width=0.90\columnwidth]{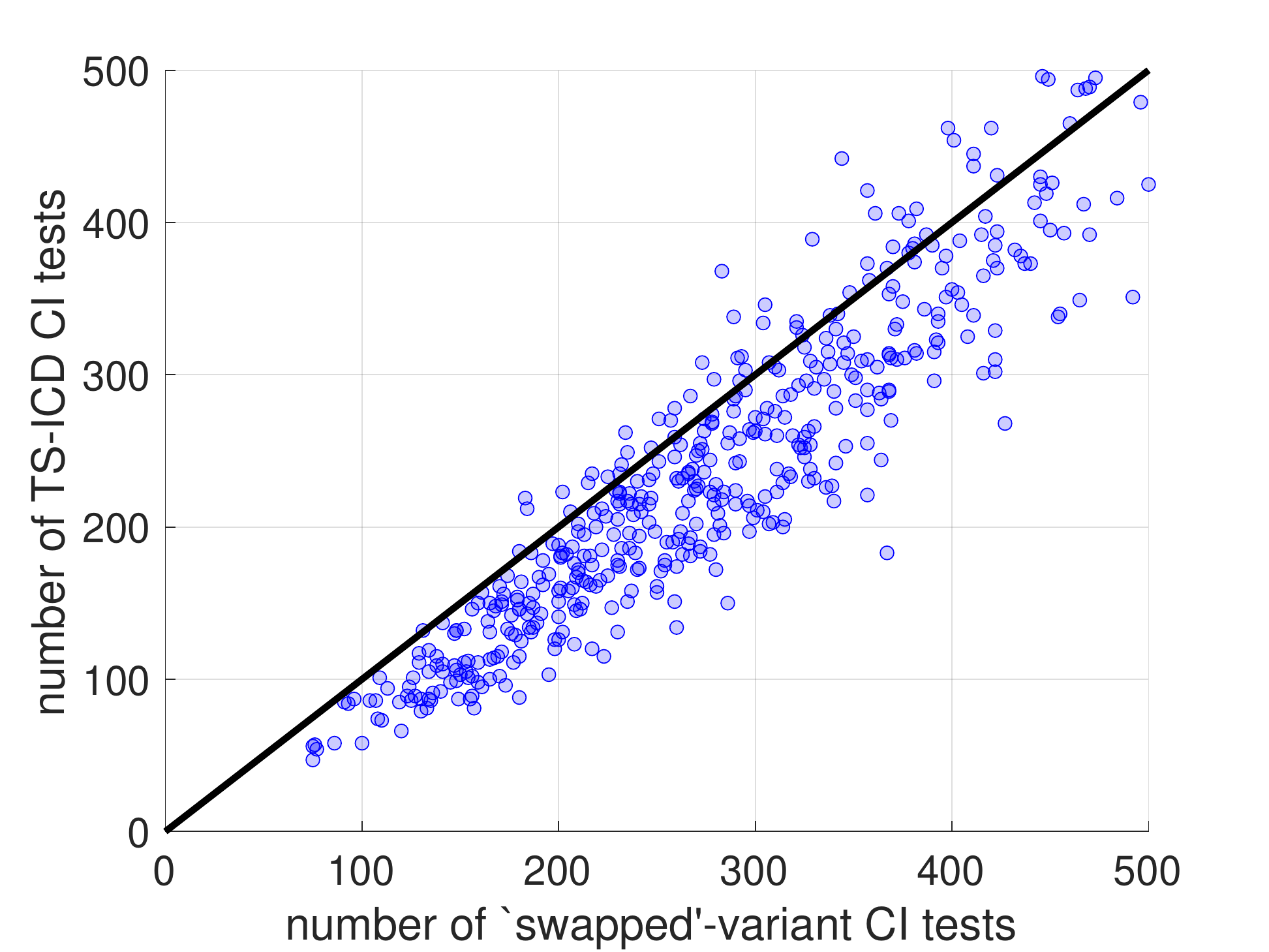}
\caption{What if we reverse the order of edges for CI-testing used by TS-ICD? Scatter plots for `swapped' variant vs. TS-ICD for: (a) F1-score for skeleton, (b) causal accuracy, (c) number of CI tests. Difference in skeleton F1-scores is not statistically significant. TS-ICD achieves higher causal accuracy and requires fewer CI tests, with statistical significance.}
\label{fig:swapped_scatter}
\end{figure}

\section{Synthetic Data: False Negatives, False Positives, Precision, and Recall\label{apx:fprfnr}}

In this section, we provide the false-positive ratio, false-negative ratio, precision, and recall for the graph-skeletons learned in experiments described in \secref{sec:synthetic_experiment} and \tabref{tab:parcorr500}. Results for the linear-Gaussian model are provided in \tabref{tab:fr-lin-gauss} and for the discrete (binary) nodes model in \tabref{tab:fr-discrete}. For the linear-Gaussian model, SVAR-FCI and LPCMCI achieved the lowest false-positive ratios and highest precision of identified edges. For the discrete model, SVAR-FCI achieved the lowest false-positive ratios and highest precision.
For both models, TS-ICD achieved the lowest false-negative and highest recall. This indicates that TS-ICD compared to the tested algorithms recovers more correct edges (fewer erroneous omissions).  
In extended experiments with data sizes $T=200$ and $T=1000$, the relative advantages and disadvantages of algorithms, and conclusions were similar. 

\begin{table}
\centering
\small
\begin{tabular}{lcc|cc}
    \toprule
    Algorithm & FNR & FPR & Precision & Recall \\
    \midrule
    SVAR-FCI & 0.7576 & \textbf{0.0000} & \textbf{1.0000} & 0.2424 \\
    (MAD) & {\small(0.1496)} & {\small(0.0047)} & {\small(0.0600)} & {\small(0.1496)} \\
    \midrule
    SVAR-RFCI & 0.7308 & 0.0085 & 0.9000 & 0.2692 \\
    (MAD) & {\small(0.1482)} & {\small(0.0084)} & {\small(0.0922)} & {\small(0.1482)} \\
    \midrule
    LPCMCI & 0.6087 & \textbf{0.0000} & \textbf{1.0000} &  0.3913\\
    (MAD) & {\small(0.1816)} & {\small(0.0070)} & {\small(0.0577)} & {\small(0.1816)} \\
    \midrule
    TS-ICD & \textbf{0.5600} & 0.1504 & 0.6000 & \textbf{0.4400} \\
    (MAD) & {\small(0.0707)} & {\small(0.0540)} & {\small(0.1813)} & {\small(0.1504)} \\
    \bottomrule
\end{tabular}

\caption{Performance for the linear-Gaussian model.}
\label{tab:fr-lin-gauss}
\end{table}

\begin{table}
\centering
\small
\begin{tabular}{lcc|cc}
    \toprule
    Algorithm & FNR & FPR & Precision & Recall \\
    \midrule
    SVAR-FCI & 0.5964 & \textbf{0.0009} & \textbf{0.9857} & 0.4036 \\
    (MAD) & {\small(0.2768)} & {\small(0.0016)} & {\small(0.0257)} & {\small(0.2768)} \\
    \midrule
    SVAR-RFCI & 0.5768 & 0.0017 & 0.9746 & 0.4232 \\
    (MAD) & {\small(0.2974)} & {\small(0.0027)} & {\small(0.0406)} & {\small(0.2974)} \\
    \midrule
    LPCMCI & 0.6392 & 0.0700 & 0.5716 &  0.3608\\
    (MAD) & {\small(0.2763)} & {\small(0.0518)} & {\small(0.2555)} & {\small(0.2763)} \\
    \midrule
    TS-ICD & \textbf{0.5328} & 0.0172 & 0.8311 & \textbf{0.4672} \\
    (MAD) & {\small(0.2805)} & {\small(0.0120)} & {\small(0.1129)} & {\small(0.2805)} \\
    \bottomrule
\end{tabular}

\caption{Performance for the discrete (binary nodes) model.}
\label{tab:fr-discrete}
\end{table}


\end{document}